\documentclass[10pt,reqno]{article}
\usepackage{amsmath,amssymb,amsthm}
\usepackage{esint}
\usepackage{bm}
\usepackage{url}
\usepackage{subfigure}
\usepackage{graphicx,color, epstopdf}
\usepackage{algorithm}
\usepackage{algpseudocode}
\usepackage{algorithmicx}
\usepackage{hyperref}

\usepackage{booktabs,multirow}
\usepackage{hhline}
\usepackage{setspace}

\usepackage{caption}
\usepackage{hyphsubst}
\usepackage{caption}

\usepackage{float}
\usepackage{filecontents}
\usepackage{hyperref}

\hypersetup{colorlinks,citecolor=blue,linkcolor=blue, urlcolor=blue}
\usepackage{pdflscape}
\usepackage{breakurl}
%\usepackage[francais]{babel}
%\usepackage[latin1]{inputenc}

%\renewcommand{\baselinestretch}{1.2}
%%%%%%%%%%%%%%%%%%%%%%%%%=
%%%%%%%%%%%%
\setlength{\textwidth}{145mm} \setlength{\textheight}{205mm}
\setlength{\oddsidemargin}{11mm} \setlength{\evensidemargin}{11mm}
%%%%%%%%%%%%%%%%%%%%%%

\def\bp{{\bf p}}

\def\bu{{\bf u}}
\def\bv{{\bf v}}
\def\bw{{\bf w}}
\def\bx{{\bf x}}

\newcommand{\ds}{\displaystyle}
\newcommand{\RR}{\mathbb{R}}
\newcommand{\NN}{\mathbb{N}}
\newcommand{\vf}{\varphi}

%%%%%%%%%%%%%%%%%%%%%%%%%%%%%%%%%%%%%%%%%%%%%%%%%%%%
% Abbreviate definitions of greek symbols
%%%%%%%%%%%%%%%%%%%%%%%%%%%%%%%%%%%%%%%%%%%%%%%%%%%%

\newcommand{\CC}{\mathbb{C}}

\DeclareMathAlphabet{\itbf}{OML}{cmm}{b}{it}

\newtheorem{thm}{Theorem}[section]

\newtheorem{lem}[thm]{Lemma}
\newtheorem{prop}[thm]{Proposition}
\newtheorem{defn}[thm]{Definition}
\newtheorem{rem}[thm]{Remark}
\newtheorem{example}[thm]{Example}

\numberwithin{equation}{section}

\newcommand{\email}[1]{\protect\href{mailto:#1}{#1}}

\newcommand{\pathfigures}{Figures/}
\graphicspath{{\pathfigures}}

%%%%%%%%%%%%%%%%%%%%%%%%%%%%%%%%%%%%%%%%

\begin{document}

\title{Performance Analysis of Fractional Learning Algorithms 
%\thanks{ABC EFG}
}

\author{
%ABCD EFGH\footnotemark[3]
%\and 
Abdul Wahab\footnotemark[1] \footnotemark[2]
\and
Shujaat Khan\footnotemark[3]
\and 
Imran Naseem\footnotemark[4]\, \footnotemark[5]
\and
Jong Chul Ye\footnotemark[3]
}
\maketitle
\renewcommand{\thefootnote}{\fnsymbol{footnote}}
\footnotetext[1]{Corresponding Author. E-mail address:  \email{abdul.wahab@nu.edu.kz}.}
\footnotetext[2]{Department of Mathematics, School of Sciences and Humanities, Nazarbayev University, 53, Kabanbay Batyr Ave., 010000, Nur-Sultan, Kazakhstan (\email{abdul.wahab@nu.edu.kz)}.}
\footnotetext[3]{Bio-Imaging, Signal Processing and Learning Lab., Department of Bio and Brain Engineering, Korea Advanced Institute of Science and Technology (KAIST), 291 Daehak-ro, Yuseong-gu, 34141, Daejeon, South Korea (\email{shujaat@kaist.ac.kr, jong.ye@kaist.ac.kr}).}
\footnotetext[4]{School of Electrical, Electronic and Computer Engineering, The University of Western Australia, 35 Stirling Highway, Crawley, Western Australia 6009, Australia (\email{imran.naseem@uwa.edu.au)}.}
\footnotetext[5]{College of Engineering, Karachi Institute of Economics and Technology, Korangi Creek, 75190, Pakistan.}
\renewcommand{\thefootnote}{\arabic{footnote}}

\begin{abstract}

Fractional learning algorithms are trending in signal processing and adaptive filtering recently. However, it is unclear whether the proclaimed superiority over conventional algorithms is well-grounded or is a myth as their performance has never been extensively analyzed.  In this article, a rigorous analysis of fractional variants of the least mean squares and steepest descent algorithms is performed. Some critical schematic kinks in fractional learning algorithms are identified. Their origins and consequences on the performance of the learning algorithms are discussed and swift ready-witted remedies are proposed. Apposite numerical experiments are conducted to discuss the convergence and efficiency of the fractional learning algorithms in stochastic environments. 

\end{abstract}

%\noindent {\footnotesize {\bf AMS subject classifications 2000.} Primary: XXXXX; Secondary: XXXXX.}

\noindent {\footnotesize {\bf Keywords.} Least Mean Squares; Fracational Least Mean Squares; Fractional derivatives, Gradient descent.}

\section{Introduction} \label{s:intro}

The least mean square (LMS) algorithms are of paramount importance in the field of signal processing since their emergence \cite{Widrow1, Widrow2, CLMS}. In particular, they are used profusely in adaptive filtering and signal analysis  \cite{Haykin, WidrowBook, LMSBook1, Bellanger}. The key aspects that make LMS algorithms attractive are their low complexity, stability, and an unbiased mean convergence to the so-called Wiener solution in stationary environments \cite{Slock}. Unfortunately, its rate of convergence depends on the eigenvalue spread of the correlation matrix of the input signal in non-stationary environments \cite{Widrow2, Haykin}. Accordingly,  many variant algorithms were proposed to achieve better performance by curtailing the influence of the spectral properties of the input signal correlation matrix; see, for instance, the LMS-Newton algorithm \cite{Diniz92}, transform-domain algorithm \cite{Narayan}, and affine projection algorithm \cite{Soni}. On the other hand, a desire for computationally simpler algorithms has also led to the development of many variants such as quantized-error algorithms \cite{Bermudez,QKLMS, LMS11} and normalized LMS algorithms \cite{Yassa, Slock, NLMS}. A decent list of these variant algorithms along with the details of their key features is provided in \cite[Ch. 4]{LMSBook1}.  We also refer to fairly recent survey articles \cite{Widrow2016, Diniz2016} on the history of adaptive filtering and the development of the LMS algorithms. 

Recently, so-called fractional LMS and relavent learning algorithms are trending and many variants have been proposed over the past decade. The central idea behind these algorithms is to replace the classical integer-order gradient with a fractional gradient to achieve better performance. The idea of using fractional derivatives in the LMS algorithms first appeared in \cite{Raja1} in the context of a system identification problem. Subsequently, the algorithm was applied for chaotic and nonstationary time series prediction \cite{shoaib2014modified},  parameter estimation of CARMA systems \cite{CARMA, C-CARMA}, active noise control systems \cite{Noise, Shah-noise}, Hammerstein nonlinear autoregressive systems \cite{chaudhary2015design,ARMAX}, and nonlinear Box-Jenkins systems \cite{Box-Jenkins, C-Box-Jenkins}. Many variant fractional LMS algorithms were also proposed such as fractional steepest descent approach \cite{pu2013fractional, C-TNNLS}, fractional normalized LMS \cite{chaudhary2019normalized, khan2019design}, bias compensated fractional normalized LMS \cite{Yin2019, chaudhary2019design, C-EPJP}, fractional filtered-X normalized LMS \cite{yin2019novel}, complex fractional LMS \cite{Shah, C-Shah}, momentum fractional LMS \cite{khan2019design1, zubair2018momentum},  and Volterra fractional LMS \cite{chaudhary2019fractional,chaudhary2018novel}.  

Unfortunately, there were two kinks in the initial designs of the fractional LMS algorithms: (1) The emergence of the complex outputs and, (2) the \emph{long-memory} characteristic of the fractional derivatives. The iterate-update rules involved a fractional power of the past iterates. Consequently, the fractional algorithms would render complex outputs whenever the intermediate iterates became negative. This led to the introduction of an absolute value in the iterate-update rule as a ready-witted remedy; see, for example, \cite{ARMAX, Noise, chaudhary2020innovative}. Secondly, the fractional derivatives were \emph{non-local} in nature because they were defined in terms of convolution integrals, unlike their integer-order counterparts.  Accordingly, the fractional LMS algorithms carried forward information of all the past states that created doubts about their ability to provide \emph{punctual geometric} information, or simply about their convergence to the optimal solution even in stationary environments. On the other hand, due to long-memory requirements, the computational cost was also high. This led to the developments of variable initial value and variable fractional-order schemes; see, for instance, \cite{cheng2017universal, cheng2018identification, Cheng2017,cui2017innovative}. The modified algorithms were used in many applied problems \cite{khan2019fractional, shah2018fractional,zhu2018fractional}. More recently, these algorithms were also used in neural network designs \cite{chen2019delay, sheng2019convolutional, wang2017fractional, wang2017convergence}. 

Despite their widespread use, the performance analysis of the fractional LMS algorithms was mostly done heuristically. The first attempt can be traced back to Pu et al. \cite{pu2013fractional} who performed the rate of convergence analysis of a fractional steepest descent algorithm (however, the algorithm suffered from the issue of complex outputs and their analysis relied on some detrimental approximations; see \cite{C-TNNLS}). Chen et al. \cite{chen2017study} and Wei et al. \cite{wei2020generalization, C-Franklin} studied variable initial value and fractional-order gradient methods and showed that, under the assumption of convergence, the solution converges to the Wiener solution in stationary cases. Chaudary et al. \cite{chaudhary2019design} and \cite{ISA} discussed the convergence of two variants of fractional LMS in non-stationary cases, however, both results had technical flaws (see, \cite{C-ISA, C-EPJP}). Bershad, Wen and So \cite{Bershad} performed extensive numerical simulations and established that the performance of fractional LMS algorithms was no better than the conventional LMS algorithms in stochastic cases. 

The main goal of this article is to rigorously analyze the performance of the fractional learning algorithms from both mathematical and numerical points of view. As a simple example, we consider a system identification problem for which we study two representative fractional algorithms. We aim to rigorously derive the update rules and discuss the underlying assumptions and schematic kinks of the fractional LMS algorithms. Moreover, we study the connection, if any, between the ordinary and fractional critical points and discuss the convergence of the algorithms. 

The rest of this article is arranged as follows. In Section \ref{s:form}, we introduce some preliminaries, a model problem, and two representative fractional algorithms considered in this study. In Section \ref{s:Kinks}, we perform rigorous mathematical analysis of the representative algorithms. We first derive the iterate-update rules and discuss their underlying assumptions. The schematic kinks in the fractional algorithms are identified and discussed from the mathematical and geometrical points of view. The convergence of the algorithms in stationary environments is also discussed. In Section \ref{s:Num}, some numerical experiments are conducted for performance analysis in stochastic environments. The article ends with a brief discussion and summary of the results in Section \ref{s:Conc}.

\section{Problem Formulation}\label{s:form}

The primary concern of this article is the performance analysis of fractional learning algorithms. For simplicity, we entertain two representative fractional LMS algorithms for a system identification problem. Accordingly, we feel it important to provide some preliminaries from fractional Calculus (Section \ref{ss:preliminaries}), give a brief introduction to the system identification problem (Section \ref{ss:SIP}), and introduce the iterate-update rules for the prototype fractional algorithms (Section \ref{ss:rules}). 

\subsection{Elements of Fractional Calculus}\label{ss:preliminaries}

In this article, $\mathbb{N}$, $\mathbb{Z}$, $\mathbb{R}$, and $\mathbb{C}$ represent the sets of natural numbers, integers, real numbers, and complex numbers, respectively. Moreover, for any $z=x+\sqrt{-1}y\in\mathbb{C}$, we denote the real and imaginary components of $z$ by $\Re\{z\}:=x$ and $\Im\{z\}:=y$, respectively. Further, for any vector $\bu\in\RR^N$, the quantity $(\bu)_i$ represents the $i-$th component of the vector $\bu$, for any $1\leq i\leq N\in \mathbb{N}$. Similarly, for any matrix $\mathbf{M}\in\RR^{N\times M}$, the $ij-$th entry is represented by $(\mathbf{M})_{ij}$, for $1\leq i\leq N\in \mathbb{N}$ and $1\leq j\leq M\in \mathbb{N}$. Finally, throughout this article, all the vector quantities are represented by lower-case bold letters, and all the matrices are represented by upper-case bold letters. 

\begin{defn}[Gamma Function]\label{Def1}
For any $z\in\mathbb{C}$ such that $\Re \{z\}>0$,  the Euler's gamma function, denoted by $\Gamma$, is  defined by
$$
\Gamma(z):=\int_0^\infty t^{z-1}e^{-t}dt.
$$
Note that, $\Gamma(1+z)=z\Gamma(z)$ and therefore, $\Gamma(1+n)=n!$ for any $n\in\mathbb{Z}$ such that $n\geq 0$. 
\end{defn}

\begin{defn}[Riemann-Liouville Fractional Integrals {\cite[Eqs. (2.1.1)-(2.1.2)]{Kilbas}}]\label{Def2}

Let $z\in\CC$ such that $0<\Re\{z\}<1$. The left fractional integral of order $z$ over an interval $(a,b)\subset \RR$ of a left integrable function $\vf$, with $a<b$, is defined by 
\begin{align}
I_L^z[\vf](t):=\dfrac{1}{\Gamma(z)}\int_a^t\dfrac{\vf(\tau)}{(t-\tau)^{1-z}}d\tau,\quad \forall t\in(a,b).
\end{align}
Similarly,  the right fractional integral of order $z$ over $(a,b)\subset \RR$ of a right integrable function $\vf$ is defined by 
\begin{align}
I_R^z[\vf](t):=\dfrac{1}{\Gamma(z)}\int_t^b\dfrac{\vf(\tau)}{(\tau-t)^{1-z}}d\tau,\quad \forall t\in(a,b).
\end{align}
%See, for instance, \cite[Eqs. (2.1.1)-(2.1.2)]{Kilbas}.
\end{defn}

\begin{defn}[Riemann-Liouville Fractional Derivatives {\cite[Eqs. (2.1.5)-(2.1.6)]{Kilbas}}]\label{def3}

Let $z\in\CC$ such that $m-1<\Re\{z\}<m$ for some $m\in\mathbb{N}$. The \emph{left Riemann-Liouville fractional derivative} of order $z$ over an interval $(a,b)$ with $a<b$ of a sufficiently smooth function $\vf$ is defined by 
\begin{align}
{^L_aD_t^z}[\vf] 
:=\dfrac{d^m}{dt^m}\left(I_L^{m-z}\big[\vf\big](t)\right)
=\dfrac{1}{\Gamma(m-z)}\dfrac{d^m}{dt^m}\int_a^t\dfrac{\vf(\tau)}{(t-\tau)^{1-m+z}}d\tau,\quad \forall t\in(a,b).\label{LRL}
\end{align}
Similarly, the \emph{right Riemann-Liouville fractional derivative} of $\vf$ of order $z$ is defined by
\begin{align}
{^R_aD_t^z}[\vf] 
:=(-1)^m\dfrac{d^m}{dt^m}\left(I_R^{m-z}\big[\vf\big](t)\right)
=\dfrac{(-1)^m}{\Gamma(m-z)}\dfrac{d^m}{dt^m}\int_t^b\dfrac{\vf(\tau)}{(\tau-t)^{1-m+z}}d\tau,\quad \forall t\in(a,b).\label{RRL}
\end{align}
%See, for instance, \cite[Eqs. (2.1.5)-(2.1.6)]{Kilbas}.
\end{defn}

\begin{rem}\label{Rem1}
If $z\in\RR$ in Definition \ref{def3} such that $z\to m$ then, 
$$
{^L_aD_t^z}[\vf](t)\to \dfrac{d^m \vf(t)}{dt^m}\quad\text{and}\quad {^R_aD_t^z}[\vf](t)\to \dfrac{d^m \vf(t)}{dt^m},
\quad\forall t\in(a,b).
$$ 
Refer, for instance, to the monographs \cite{Kilbas, Podlubny} for detailed expositions.
\end{rem}
The following theorem holds.
\begin{thm}[{\cite[Property 2.1, Page 71]{Kilbas}}]\label{ThmDerivative}
If $\alpha,\beta\in\CC$ such that $\Re\{\alpha\}\geq 0$ and $\Re\{\beta\}>0$ then  
\begin{align}
{^L_aD_t^\alpha}[(t-a)^{\beta-1}](t) = \frac{\Gamma(\beta)}{\Gamma(\beta-\alpha)}(t-a)^{\beta-\alpha-1},\label{FracDerPowerL}
\\
{^R_tD_b^\alpha}[(b-t)^{\beta-1}](t) = \frac{\Gamma(\beta)}{\Gamma(\beta-\alpha)}(b-t)^{\beta-\alpha-1}, \label{FracDerPowerR}
\end{align}
for all $t\in(a,b) \subset \RR$. In particular,
\begin{align}
{^L_aD_t^\alpha}[1](t) = \frac{(t-a)^{-\alpha}}{\Gamma(1-\alpha)}
\quad\text{and}\quad 
{^R_aD_t^\alpha}[1](t) = \frac{(b-t)^{-\alpha}}{\Gamma(1-\alpha)}.
\end{align}
\end{thm}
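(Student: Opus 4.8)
The plan is to reduce the statement to a Beta-function computation for the fractional \emph{integral} of a power, and then recover the derivative formula by differentiating an integer number of times. The point is that the Riemann--Liouville derivative in Definition~\ref{def3} is, by construction, an ordinary $m$-fold derivative of a fractional integral of order $m-\alpha$ with positive real part, so it suffices to evaluate that integral in closed form first and differentiate afterwards --- no differentiation under the integral sign is ever needed.

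First I would establish the auxiliary identity: for $\gamma\in\CC$ with $\Re\{\gamma\}>0$ and $\Re\{\beta\}>0$,
\[
I_L^\gamma\big[(t-a)^{\beta-1}\big](t)=\frac{\Gamma(\beta)}{\Gamma(\beta+\gamma)}\,(t-a)^{\beta+\gamma-1},\qquad t\in(a,b).
\]
This follows by substituting $\tau=a+(t-a)s$ in the integral defining $I_L^\gamma$ (so $t-\tau=(t-a)(1-s)$ and $\tau-a=(t-a)s$): the integral collapses to $\Gamma(\gamma)^{-1}(t-a)^{\beta+\gamma-1}\int_0^1 s^{\beta-1}(1-s)^{\gamma-1}\,ds$. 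The hypotheses $\Re\{\beta\}>0$ and $\Re\{\gamma\}>0$ guarantee absolute convergence of this Beta integral, whose value $B(\beta,\gamma)=\Gamma(\beta)\Gamma(\gamma)/\Gamma(\beta+\gamma)$ gives the claim. The analogous identity for $I_R^\gamma$ acting on $(b-t)^{\beta-1}$ follows identically via $\tau=b-(b-t)s$, or by the reflection $t\mapsto a+b-t$, which swaps the left and right operators.

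Next, for the derivative, choose $m\in\NN$ with $m-1<\Re\{\alpha\}<m$; the degenerate case $\alpha\in\{0,1,2,\dots\}$ is covered by Remark~\ref{Rem1}, where the statement is just ordinary differentiation of a power. By Definition~\ref{def3}, ${^L_aD_t^\alpha}[(t-a)^{\beta-1}]=\frac{d^m}{dt^m}I_L^{m-\alpha}[(t-a)^{\beta-1}]$, and since $\Re\{m-\alpha\}=m-\Re\{\alpha\}>0$ the auxiliary identity applies with $\gamma=m-\alpha$, giving $I_L^{m-\alpha}[(t-a)^{\beta-1}](t)=\frac{\Gamma(\beta)}{\Gamma(\beta+m-\alpha)}(t-a)^{\beta+m-\alpha-1}$. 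It remains to differentiate this power $m$ times on $(a,b)$, where $t-a>0$: for any complex exponent $p$ one has $\frac{d^m}{dt^m}(t-a)^p=\big(\prod_{j=0}^{m-1}(p-j)\big)(t-a)^{p-m}=\frac{\Gamma(p+1)}{\Gamma(p-m+1)}(t-a)^{p-m}$. Taking $p=\beta+m-\alpha-1$, the factor $\Gamma(\beta+m-\alpha)$ cancels and one obtains \eqnref{FracDerPowerL}; \eqnref{FracDerPowerR} follows by the same reflection as above, and the particular cases by setting $\beta=1$ with $\Gamma(1)=1$.

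The step I would be most careful about is the boundary bookkeeping in the gamma factors. When $\beta-\alpha$ is a nonpositive integer, $\Gamma(\beta-\alpha)$ is a pole and the right-hand side of \eqnref{FracDerPowerL} should be read as $0$; the product form $\prod_{j=0}^{m-1}(p-j)$ of the $m$-fold derivative shows the left-hand side vanishes in exactly those cases, so the identity is consistent and should be understood as an identity of meromorphic functions of $(\alpha,\beta)$. I would also state the convergence constraints explicitly ($\Re\{\beta\}>0$ for the original integral, $\Re\{m-\alpha\}>0$ automatically from the choice of $m$) so that the Beta-integral evaluation is rigorously justified rather than merely formal. Beyond that, the argument is entirely routine.
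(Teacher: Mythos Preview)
Your argument is correct and is exactly the standard Beta-function computation used to establish this identity (and is the argument given in the cited reference \cite{Kilbas}). The paper itself does not supply a proof of Theorem~\ref{ThmDerivative}; it simply quotes the result from \cite[Property~2.1]{Kilbas}, so there is nothing further to compare.
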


\subsection{Adaptive Filters for System Identification}\label{ss:SIP}

The system identification problem is well-known in adaptive filtering. However, it is the main building block of the analytic arguments in this study. Therefore, we elaborate on it anyway to facilitate the ensuing discussion. Consider a simple problem of identification of a real-valued discrete-time filter with the unknown finite impulse response, 
$$
\bw_n:=\begin{bmatrix} w_n(0) & w_n(1) & \cdots & w_n(N-1)\end{bmatrix}^\top\in\RR^{N},
$$  
that describes the behavior of input, 
$$
\bx_n:=\begin{bmatrix} x_n & x_{n-1} & \cdots & x_{n-N+1}\end{bmatrix}^\top\in\RR^{N},
$$
to desired output, $d_n\in\RR$.  The subscript $n\in\mathbb{Z}$ is the time index and the superposed $\top$ indicates the transpose operation. The target output $d_n$ is furnished in order to supervise the filter weights $w_n$ so that  the system renders  filter output,  $y_n=\bw_n^\top\bx_n\in\RR$, that resembles the target in the least mean square sense. The mean square error (MSE), denoted by $\mathcal{E}$, is defined in terms of the output estimation error, $e_n:=d_n-y_n=d_n - \bw_n^\top \bx_n$ (for a fixed $\bw_n$),   as  
\begin{align}
\mathcal{E}(\bw_n) := E\left[e_n^2\right] = E\left[(d_n-\bw^\top_n \bx_n)^2\right]=\sigma_d^2-2\bw^\top_n\bp+\bw^\top_n\mathbf{R}\bw_n.\label{MSE}
\end{align}
Here, $\mathbf{R}:=E\left[\bx_n\bx_n^\top\right]\in\RR^{N\times N}$ is  the auto-correlation of  the input, $\bp:=E\left[d_n\bx_n\right]\in\RR^{N}$ is the cross-correlation between the input and the target output, and $\sigma_d^2:=E[d_n^2]$ is the variance of the target output. 

To solve the system identification problem,  the  MSE   is minimized for optimal weight filter, 
$$
\bw^\star:=\begin{bmatrix} w^\star(0) & w^\star(1)& \cdots & w^\star(N-1)\end{bmatrix}^\top. 
$$ 
Towards this end, the least mean squares (LMS) algorithm  is usually invoked. The idea is to allow the weights to be \emph{time-varying} so that they can be optimized in an iterative manner along the steepest descent of ${\mathcal{E}}$. Accordingly, utilizing the available data, the instantaneous gradient vector is derived as 
\begin{align}
\nabla_{\bw_n} \hat{\mathcal{E}}(\bw_n) = \nabla_{\bw_n} e_n^2 = \frac{\partial e^2_n}{\partial e_n}\frac{\partial e_n}{\partial y_n}\nabla_{\bw_n}y_n = -2e_n\bx_n,\label{ChainRule}
\end{align}
thanks to the chain rule for the ordinary derivatives of composite functions.
Since the negative of the gradient vector always points towards the direction of the steepest descent of the hyper-paraboloid surface formed by ${\mathcal{E}}$, directional increments opposite to the gradient vector gradually move the successive weight iterates closer to the minimum of ${\mathcal{E}}$. Accordingly, the LMS weight-update rule is defined as 
\begin{align}
\bw_{n+1} :=\bw_n-\frac{\mu_\ell}{2}\nabla_{\bw_n}\hat{\mathcal{E}}(\bw_n)= \bw_n +\mu_\ell e_n\bx_n,\label{LMSRule}
\end{align}
with an initial guess based on \emph{\`a priori} information and a parameter $\mu_\ell>0$ controlling the rate of learning. The LMS algorithm is very stable and efficient with a computational cost of $\mathcal{O}(N)$ per iteration. However, its convergence highly depends on the \emph{condition number} of the auto-correlation matrix $\mathbf{R}$ and is relatively poor.  Accordingly, various remedial interventions have been proposed in the conventional LMS algorithms and numerous modified algorithms are available achieving better convergence rates at the cost of increased computational complexity and reduced efficiency. A detailed account of these variants is out of the scope of the present investigation, however, it is emphasized that the ensuing discussion is also relevant to these variants and similar analyses hold with appropriate adjustments.

\subsection{Fractional Learning Algorithms}\label{ss:rules}

In the recent past, a plethora of fractional variants of the LMS and  steepest descent algorithms (SDA) have been proposed in adaptive signal processing wherein classical integer-order derivatives are fully or partially replaced by the fractional-order derivatives with order parameter (say) $\alpha\in (0,1)$. The simplest form of the fractional LMS iterate-update rule appears to be
\begin{align}
\bw_{n+1} : =\bw_n-\frac{\mu_\ell}{2} \nabla_{\bw_n} \left[\hat{\mathcal{E}}(\bw_n) \right] -\frac{\mu_f}{2}\, \nabla_{\bw_n}^\alpha\left[\hat{\mathcal{E}}(\bw_n) \right],
\label{FSDRule}
\end{align}
in terms of the  fractional gradient 
\begin{align*}
\left(\nabla^\alpha_{\bw_n} \hat{\mathcal{E}}(\bw_n)\right)_{l} := {^L_a}D_{w_n(l)}^\alpha\left[\hat{\mathcal{E}}(\bw_n)\right].
\end{align*}  
Here, $\mu_f$ is a control parameter that supervises the rate of learning due to fractional gradient. Some of the variant algorithms completely replace the integer-order gradient in conventional counterparts with that of fractional-order (in which case $\mu_\ell = 0$). On the other hand, some algorithms use both $\mu_\ell, \mu_f >0$, i.e., both fractional and integer-order gradients are assumed to play a role in the weight update through an iterative procedure. Moreover, when $\mu_f=0$, the fractional LMS algorithm tends to the LMS algorithm. 

There are several variants of the fractional LMS algorithms available in the literature. In this investigation, we discuss two representative iterate-update rules for brevity and clarity.  Nevertheless, we emphasize that the analysis performed here is relevant to other variants also since weight iterates corresponding to many of them usually follow slightly modified versions of the representative iterate-update rules.
The first representative iterate-update rule proposed in \cite{Raja1} suggests 
\begin{align}
w_{n+1}(l) =w_{n}(l) + \mu_{\ell}\, e_n\, x_{n-l} + \frac{\mu_f}{\Gamma(2-\alpha)} \,e_n x_{n-l} w_n^{1-\alpha}(l), \quad l=0,1,\cdots, N-1,
\label{FLMSRuleCom}
\end{align}
or in vector form 
\begin{align}
\bw_{n+1} =\bw_{n} + \mu_{\ell}\, e_n\,\bx_{n} + \frac{\mu_f}{\Gamma(2-\alpha)} \,e_n \bx_n\odot \bw_n^{1-\alpha}.
\label{FLMSRuleVec}
\end{align}
Here, $\odot$ represents element-wise product and notation $\bu^{1-\alpha}$ is used for element-wise  power of any $\bu\in\RR^N$, i.e.,   $\bu^{1-\alpha}= \begin{bmatrix} u^{1-\alpha}_1  & u_2^{1-\alpha}& \cdots & u_N^{1-\alpha}\end{bmatrix}$. 
Alternatively, by letting 
\begin{align}
\mathbf{F}_\alpha(\bu):=\frac{1}{\Gamma(2-\alpha)}{\rm diag}\left(u_1^{1-\alpha},u_2^{1-\alpha},\cdots, u_N^{1-\alpha}\right)\in\RR^{N\times N},
\end{align}
the iterate-update rule \eqref{FLMSRuleCom} can be written in vector form as 
\begin{align}
\bw_{n+1} =\bw_{n} + \mu_{\ell}\, e_n\,\bx_{n} + \mu_f \,e_n \mathbf{F}_\alpha(\bw_n)\bx_n.
\label{FLMSRuleF}
\end{align}
The second representative iterate-update rule proposed, e.g.,  in \cite{Cheng2017, Yin2019}, is given as
\begin{align}
w_{n+1}(l) =& w_n(l) +  \frac{\mu_f}{\Gamma(2-\alpha)}e_n x_{n-l}\left(w_n(l)-w_{n-1}(l)\right)^{1-\alpha}, \quad l=0,1,\cdots, N-1,
\label{FLMSRule2}
\end{align}
or in vector form as
\begin{align}
\bw_{n+1}(l) =& \bw_n(l) +  \mu_fe_n \mathbf{F}_\alpha(\bw_n-\bw_{n-1})  \bx_{n}.
\label{FLMSRule2Vec}
\end{align}

Remark that, there is a possibility of having negative values under fractional powers in \eqref{FLMSRuleCom} and \eqref{FLMSRule2} that could lead to complex outputs and could cause emergency exits. As a ready-witted remedy, absolute values are introduced in the iterate-update rules, i.e., 
\begin{align}
w_{n+1}(l) =&w_{n}(l) + \mu_{\ell}\, e_n\, x_{n-l} + \frac{\mu_f}{\Gamma(2-\alpha)} \,e_n x_{n-l}\Big|w_n(l)\Big|^{1-\alpha}, %\quad\text{for all}\,\, l=0,1,\cdots, N-1,
\label{FLMSRuleComMod}
\\
w_{n+1}(l) =& w_n(l) + \frac{\mu_f}{\Gamma(2-\alpha)}e_n x_{n-l}\Big|w_n(l)-w_{n-1}(l)\Big|^{1-\alpha}.
\label{FLMSRule2Mod}
\end{align}
The derivation of \eqref{FLMSRuleCom} and \eqref{FLMSRule2} from \eqref{FSDRule} will be discussed bit-by-bit in Section \ref{ss:derivations} and the underlying assumptions will be highlighted. 

\begin{rem}\label{Rem2}
It is already evident that the fractional LMS algorithms are computationally more expensive than the conventional LMS algorithms as they require additional operations to compute the fractional terms in the iterate-update rules. 
\end{rem}

\section{Schematic Kinks in Fractional Learning Algorithms}\label{s:Kinks}

In this section, we perform a rigorous mathematical analysis of the fractional LMS algorithms. We begin by deriving the representative iterate-update rules \eqref{FLMSRuleCom} and \eqref{FLMSRule2} with an aim to understand their main assumptions (Section \ref{ss:derivations}). Then, we investigate the origin and the remedies of complex outputs (Section \ref{ss:Complex}). We also discuss the geometrical interpretation of the fractional derivatives and thereby constructed learning algorithms (Section \ref{ss:geometry}). This will also help us understand the \emph{long memory} and \emph{short memory} characteristics. Finally, we discuss the convergence of these algorithms in stationary environments (Section \ref{ss:Convergence}).

\subsection{Derivation of Fractional Iterate-Update Rules}\label{ss:derivations}
We discuss the derivation of both representative iterate-update rules separately.

\subsubsection{Derivation of \eqref{FLMSRuleCom}}

Most common argument used while deriving the iterate-update rule \eqref{FLMSRuleCom} consists of using chain rule on objective functional $\hat{\mathcal{E}}(\bw_n)$ for evaluating its fractional gradient, exactly in the same fashion as in \eqref{ChainRule}; see, for instance, \cite{Raja1, Shah, Noise}. Precisely, the MSE in \eqref{MSE} is fractionally differentiated as 
\begin{align}
^L_0 D^\alpha_{w_n(l)} \left(\hat{\mathcal{E}}(\bw_n)\right) 
=& ^L_0 D^\alpha_{w_n(l)} \left(e_n^2\right) 
\nonumber
\\
=& \left(\frac{\partial e^2_n}{\partial e_n}\right)\left(\frac{\partial e_n}{\partial y_n}\right)\left(\frac{\partial y_n}{\partial w_n(l)}\right)\, ^L_0 D^\alpha_{w_n(l)} \left(w_n(l)\right)
\nonumber
\\
=&(2e_n)(-1)(x_{n-l})  \frac{w_n^{1-\alpha}}{\Gamma(2-\alpha)}
\nonumber
\\
=&-2e_n x_{n-l} \frac{w_n^{1-\alpha}}{\Gamma(2-\alpha)},\label{ChainRuleFrac}
\end{align}
using the formula \eqref{FracDerPowerL} with $a=0$. Eq. \eqref{ChainRuleFrac} renders iterate-update rules \eqref{FLMSRuleCom}, \eqref{FLMSRuleVec}, and \eqref{FLMSRuleF} on substitution in \eqref{FSDRule}. Unfortunately, the conventional chain rule used to derive \eqref{ChainRuleFrac} is mathematically invalid for fractional derivatives (see, for instance, \cite{Tarasov} for detailed discussion). The fractional chain rule for left Riemann-Liouville derivative is derived using \emph{Fa\`{a} di Bruno formula} and is given by (see, for instance, \cite[Eq. (2.209)]{Podlubny})
\begin{align}
^L_aD^p_t F(h(t)) =& \frac{(t-a)^{-p}}{\Gamma(1-p)}F(h(t))
+\sum_{k=1}^{\infty}\begin{pmatrix} p\\k \end{pmatrix}\frac{k!(t-a)^{k-p}}{\Gamma(k-p+1)}\sum_{m=1}^k F^{(m)}(h(t))\sum\ds\prod_{r=1}^k\frac{1}{a_r!}\left(\frac{h^{(r)}(t)}{r!}\right)^{a_r},
\end{align}
where $F$ and $h$ are sufficiently smooth functions, the sum $\sum$ extends over all combinations of non-negative integer values of $a_1,\cdots a_k$ such that 
\begin{align}
\sum_{r=1}^k ra_r =k\qquad\text{and}\qquad \sum_r^k a_r=m.
\end{align}

A mathematically valid procedure to derive the iterate-update rule \eqref{FLMSRuleCom} must avoid using fractional chain rule. Towards this end, the MSE in \eqref{MSE} is expanded as
\begin{align}
\hat{\mathcal{E}}(\bw_n)= \sigma_d^2-2\sum_{i=0}^{N-1} w_{n}(i)p_i(n)+\sum_{i,j=0}^{N-1} w_{n}(i)w_{n}(j) R_{ij}(n), 
\label{A1}
\end{align} 
where $p_i:=(\bp)_i$ and $R_{ij}:=\left(\mathbf{R}\right)_{ij}$ are the components of the cross-correlation vector $\bp$ and the auto-correlation matrix $\mathbf{R}$. To find the component fractional derivative, $ ^L_0 D^\alpha_{w_n(l)}[\hat{\mathcal{E}}]$, we re-arrange \eqref{A1} further as 
\begin{align*}
\hat{\mathcal{E}}(\bw_n)=& \sigma_d^2-2\sum_{\substack{i=0\\ i\neq l}}^{N-1} w_{n}(i)p_i(n)-2w_n(l) p_l(n)
+\sum_{\substack{i,j=0\\ i\neq l,j\neq l}}^{N-1}w_{n}(i)w_{n}(j) R_{ij}(n)
\\
&+\sum_{\substack{n=0\\ n\neq l}}^{N-1} w_{n}(i)w_n(l) R_{i l}(n)
+\sum_{\substack{j=0\\ j\neq l}}^{N-1} w_n(l)w_n(j) R_{l j}(n) 
+  w_n^2(l)R_{ll}(n)
\\
=& \Psi_n(l)
+2w_n(l)\left[\sum_{i=1, i\neq l}^{N-1} w_n(i) R_{i l}(n)-p_l(n)\right] +  w_n^2(l) R_{ll}(n),
%\label{A2}
\end{align*} 
where the fact that $\mathbf{R}$ is symmetric (i.e.,  $R_{ij}=R_{ji}$) is used. Here 
\begin{align}
\Psi_n(l):=\sigma_d^2-2\sum_{i=0, i\neq l}^{N-1} w_n(i)p_i(n)+\sum_{\substack{i,j=0\\ i\neq l, j \neq l}}^{N-1} w_n(i)w_n(j) R_{ij}(n),
\end{align} 
is a constant with respect to $w_n(l)$. Therefore, by the definition of the Left-Riemann-Liouville derivative and invoking the rule \eqref{FracDerPowerL}, one arrives at 
\begin{align}
^L_0D^\alpha_{w_n(l)}\left[\hat{\mathcal{E}}(\bw_n)\right]
=& 
\frac{w_n^{-\alpha}(l)}{\Gamma(1-\alpha)}\Psi_n(l)
+\frac{2w^{1-\alpha}_n(l)}{\Gamma(2-\alpha)}\left[\sum_{\substack{i=0\\ i\neq l}}^{N-1} w_n(i) R_{il}(n)-p_l(n)\right] 
\nonumber
\\
&+ \frac{2w^{2-\alpha}_n(l)}{\Gamma(3-\alpha)}R_{ll}(n).
\label{A3}
\end{align}

Let us now try to put \eqref{A3} in the form \eqref{ChainRuleFrac}. Towards this end,  we express the derivative as 
\begin{align}
^L_0D^\alpha_{w_n(l)}\left[\hat{\mathcal{E}}(\bw_n)\right]
= &
\frac{w_n^{-\alpha}(l)}{\Gamma(1-\alpha)}\Psi_n(l)
-\frac{2w^{1-\alpha}_n(l)}{\Gamma(2-\alpha)}\left[p_l(n)- \sum_{\substack{i=0}}^{N-1} w_n(i) R_{il}(n)+w_n(l)R_{ll}(n)\right] 
\nonumber
\\
& + \frac{2w^{2-\alpha}_n(l)}{\Gamma(3-\alpha)}R_{ll}(n) 
\nonumber
\\
=& 
\frac{w_n^{-\alpha}(l)}{\Gamma(1-\alpha)}\Psi_n(l)
-\frac{2w^{1-\alpha}_n(l)}{\Gamma(2-\alpha)}\left[p_l(n)- \sum_{\substack{i=0}}^{N-1} w_n(i) R_{il}(n)\right] 
\nonumber
\\ 
& + 2w^{2-\alpha}_n(l)R_{ll}(n)\left[\frac{1}{\Gamma(3-\alpha)}-\frac{1}{\Gamma(2-\alpha)}\right]
\nonumber
\\
=& 
\frac{w_n^{-\alpha}(l)}{\Gamma(1-\alpha)}\Psi_n(l)
-\frac{2w^{1-\alpha}_n(l)}{\Gamma(2-\alpha)}\Big[\left[d_n -\bw_n^\top\bx_n\right]x_{n-l}\Big]
-\frac{2(1-\alpha) w^{2-\alpha}_n(l)}{\Gamma(3-\alpha)}R_{ll}(n),
\end{align}
since $\Gamma(x+1)=x\Gamma(x)$, $p_l(n)=d_n x_{n-l}$, and $R_{il} =x_{n-i}x_{n-l}$. Therefore, one can conclude that 
\begin{align}
^L_0D^\alpha_{w_n(l)}\left[\hat{\mathcal{E}}(\bw_n)\right]
&\approx -2e_nx_{n-l}\frac{w^{1-\alpha}_n(l)}{\Gamma(2-\alpha)},
\end{align}
as in \eqref{ChainRuleFrac}, subject to following assumptions.

\begin{proof}[Assumptions]
\begin{enumerate}
\item[]
\item[A1] {\emph{For all values of $l$ and $n$, we have $w_n(l)>0$.}} 
\item[] This assumption was tacitly made when the Definition \ref{def3} of the left Riemann-Liouville derivative was used setting $a=0$ as the lower limit of the integral in \eqref{LRL}. Note that, $w_n(l)$ corresponds to the upper limit. This assumption may not be valid, especially, in the stochastic case. We will discuss this point in detail in Section \ref{sss:ComplexOutput}.
 
\item[A2] \emph{Additive constants in a function do not affect its extreme points.} 
\item[] This assertion stems from the hypothesis that the fractional derivative of the constant term, $\Psi_n(l)$, can be neglected without affecting the extrema of the function $\hat{\mathcal{E}}(\bw_n)$. This hypothesis is used in the literature (see, e.g., \cite[Remark 1]{Cheng2017}). The assertion is true when integer-order derivatives are used. However, it is invalid when fractional-order derivatives are used as they are non-zero for non-zero constants; see Theorem \ref{ThmDerivative}. We will elaborate on  this point further in Section \ref{sss:Extreme}. 

\item[A3] \emph{The first term dominates the expression 
\begin{align}
+\frac{2w^{1-\alpha}_n(l)}{\Gamma(2-\alpha)}\Big[\left[d_n -\bw_n^\top\bx_n\right]x_{n-l}\Big] +\frac{2(1-\alpha) w^{2-\alpha}_n(l)}{\Gamma(3-\alpha)}R_{ll}(n),
\end{align}
and therefore, the second term,
\begin{align}
\frac{2(1-\alpha) w^{2-\alpha}_n(l)}{\Gamma(3-\alpha)}R_{ll}(n),
\end{align}
can be suppressed.} 

\item[] This assumption may be reasonable when $\alpha\to1$. 

%\item The last assumption here is $E[e_n]\approx e_n$, which depends on the statistical properties of the transmission error $e_n$. 

\end{enumerate}
\end{proof}

\subsubsection{Derivation of \eqref{FLMSRule2}}\label{ss:derivation2}

The iterate-update rule \eqref{FLMSRule2} has been derived, for instance, in \cite[Sec. 3.2]{Yin2019} and \cite[Sec. 3.2]{Cheng2017}. However, we briefly give the idea of the derivation for completeness. Towards this end, the MSE \eqref{MSE} is expanded as 
\begin{align}
\hat{\mathcal{E}}(\bw_n) 
=&\left(d_n-\sum_{i=0}^{N-1} w_n(i)x_{n-i}\right)^2
\nonumber
\\
=&\left(d_n-\sum_{\substack{i=0\\ i\neq l}}^{N-1} w_n(i)x_{n-i}-w_{n-1}(l)x_{n-l} - [w_n(l)-w_{n-1}(l)]x_{n-l}\right)^2
\nonumber
\\
=&\left(\Phi_n(l) - \Big[w_n(l)-w_{n-1}(l)\Big]x_{n-l}\right)^2
\nonumber
\\
=&\Phi_n^2(l) - 2\Phi_n(l)\Big[w_n(l)-w_{n-1}(l)\Big]x_{n-l}+\left(\Big[w_n(l)-w_{n-1}(l)\Big]x_{n-l}\right)^2,
\label{Expanded-E-2}
\end{align}
where
\begin{align}
\Phi_n(l) := d_n - \sum_{i=0, i\neq l}^{N-1} w_n(i) x_{n-i}-w_{n-1}(l)x_{n-l}.\label{Phi_n}
\end{align}
Differentiating \eqref{Expanded-E-2} using the left Riemann-Liouville  derivative  $_{w_{n-1}(l)}{^LD}^\alpha_{w_n(l)}$ defined through \eqref{LRL}, invoking the power rule \eqref{FracDerPowerL},  and  neglecting the constant term $\Phi_n^2(l)$ as in the previous case for the derivation of \eqref{FLMSRuleCom}, one arrives at
\begin{align}
_{w_{n-1}(l)}&{^LD}^\alpha_{w_n(l)}\left[\hat{\mathcal{E}}(\bw_n) \right]
\nonumber
\\
\approx
 &
-2\frac{\Phi_n(l)}{\Gamma(2-\alpha)}x_{n-l}\left[w_n(l)-w_{n-1}(l)\right]^{1-\alpha} 
+ \frac{2}{\Gamma(3-\alpha)}x^2_{n-l}\left[w_n(l)-w_{n-1}(l)\right]^{2-\alpha},
\nonumber
\\
\approx& -\frac{2}{\Gamma(2-\alpha)} e_n x_{n-l}\Big[w_n(l)-w_{n-1}(l)\Big]^{1-\alpha}.
\label{A4}
\end{align}
This furnishes  \eqref{FLMSRule2} on substitution in \eqref{FSDRule} together with $\mu_\ell=0$. Note that,  to arrive at equation  \eqref{A4}, following assumptions were  made.
\begin{proof}[Assumptions]
\begin{enumerate}

\item[]
\item[B1] \emph{For all values of $l$ and $n$, we have ${w_{n-1}(l)}<{w_n(l)}$.}
\item[]  As mentioned in the previous case, the first assumption tacitly made while using the Definition \ref{def3} of the left Riemann-Liouville derivative was ${w_{n-1}(l)}<{w_n(l)}$. Here, $w_{n-1}(l)$ and $w_n(l)$ respectively correspond to the lower and the upper limits of the integral in \eqref{LRL}. This assumption may not be valid, e.g., when any one of the optimal weights $w^\star(l)$ is negative. In that case, the sequence $(w_n(l))_{n\in\mathbb{N}}$ is expected to be decreasing so that $\ds\lim_{n\to\infty} w_n(l)$ converge to the negative value. We will discuss this point in detail in Section \ref{sss:ComplexOutput}.

\item[B2] \emph{Additive constants in a function do not affect its extreme points} 
\item[] This assertion stems from the hypothesis that the fractional derivative of the additive constant term, $\Phi^2_n(l)$, can be neglected without affecting the extrema of the function $\hat{\mathcal{E}}(\bw_n)$. We will elaborate on this point further in Sections \ref{sss:Extreme} and \ref{sss:short}. 

\item[B3] \emph{There exists a step size $\mu_f\in\RR$ such that $|w_n(l)-w_{n-1}(l)|<1$}. 
\item[] Under this assumption, in the expression
\begin{align*}
-2\frac{\Phi_n(l)}{\Gamma(2-\alpha)}x_{n-l}\left[w_n(l)-w_{n-1}(l)\right]^{1-\alpha} 
+ \frac{2}{\Gamma(3-\alpha)}x^2_{n-l}\left[w_n(l)-w_{n-1}(l)\right]^{2-\alpha},
\end{align*}
the first term is dominant and thus, the second term can be neglected. 
\end{enumerate}
\end{proof}
\begin{rem}\label{Rem3}
Under the assumption B3, if the step size $\mu_f$ is set properly, $w_n(l)$ changes slowly, i.e., $w_n(l)\approx w_{n-1}(l)$ and consequently, $\Phi_n(l)\approx e_n$.
Indeed, we have
\begin{align*}
\Phi_n(l) =& d_n - \sum_{i=0, i\neq l}^{N-1} w_n(i) x_{n-i}-w_{n-1}(l)x_{n-l}
\\
\approx & d_n - \sum_{i=0, i\neq l}^{N-1} w_n(i) x_{n-i}-w_{n}(l)x_{n-l}
\\
=& d_n - \sum_{i=0}^{N-1} w_n(i) x_{n-i}=e_n.
\end{align*}
The downside of Assumption (B3) is a slow convergence rate for the fractional LMS algorithms defined by the iterate-update rule of the type \eqref{FLMSRule2} (i.e., the algorithms with variable initial terms).
\end{rem}

\subsection{Emergence of Complex Outputs} \label{ss:Complex}

It is interesting to note that both representative iterate-update rules \eqref{FLMSRuleCom} and \eqref{FLMSRule2} (and, in fact, all the fractional iterate-update rules available in the literature) contain fractional powers of the quantities involving iterates $w_n(l)$. Therefore, whenever a fractional iterate $w_n(l)$ under the fractional power is negative, the resultant becomes complex.  It stymies the applicability of the fractional variants of the LMS algorithm for not only negative sought values but also for positive sought values. A simple justification is that the LMS iterate does not move in a straight path towards the optimal solution, it rather takes a \textit{zigzag} path (see, e.g., Fig. \ref{Fig.1}). That motivated a heuristic introduction of the absolute value in the update rules as given in \eqref{FLMSRuleComMod} and \eqref{FLMSRule2Mod} without any retrospective or prospective analysis. In this section, we identify the origin of the complex outputs and try to make sense of an absolute value in the iterate-update rules.

\begin{figure}[!htb]
\begin{center}
%\includegraphics[width=0.4\textwidth, height=4cm]{}
%\hspace{0.05\textwidth}
\includegraphics[width=0.5\textwidth, height=4cm]{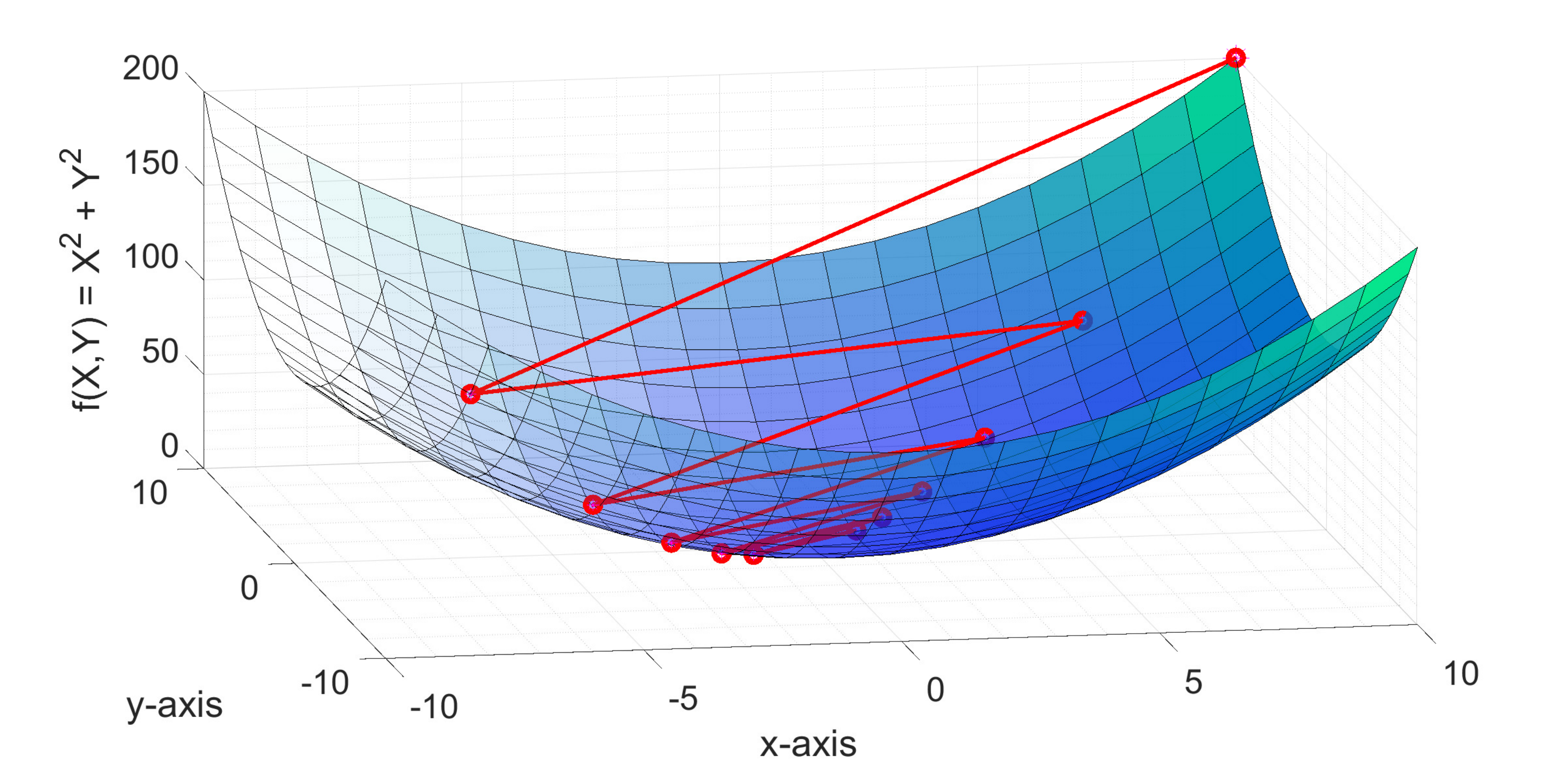}
\caption{Illustration of the LMS learning path.}\label{Fig.1}
\end{center}
\end{figure}

\subsubsection{The Genesis of Complex Outputs}\label{sss:ComplexOutput}

Let us start our discussion with the iterate-update rules of the form \eqref{FLMSRuleCom}. We recall that the left Riemann-Liouville fractional derivative $ ^L_0D^\alpha_{w_n(l)}$ was used with $a=0$ (the lower terminal limit in \eqref{LRL}). Accordingly, rule \eqref{FracDerPowerL} in Theorem \ref{ThmDerivative} for the fractional derivative of the power-law function was  invoked with the choice $a=0$ in the derivation of the iterate-update rule. This choice of $a$ correlates with the underlying (but unspecified) Assumption (A1) that 
\begin{align}
w_n(l)> 0, \qquad \text{for all}\quad 0\leq l\leq N-1,\quad n\in\mathbb{N}.
\end{align} 
Unfortunately,  it stymies the applicability of the iterate-update rule \eqref{FLMSRuleCom} for negative sought values and those scenarios of positive sought values when the path to the optimal solution of the LMS algorithm goes through negative iterates $w_n(l)$, i.e., when $w_n(l)<0$ for some  $l$  and $n$. Indeed, the fractional derivative \eqref{LRL} with $a=0$ and $t=w_n(l)$ will  produce a  complex denominator as $w_n(l)-\tau <0$ for fractional exponent $\alpha\in(0,1)$. At first, it seems strange when the fractional derivative of a real-valued function of a real variable defined over a real domain turns out to be complex. However, a closer look demystifies that the definition of the left Riemann-Liouville derivative \eqref{LRL} is used out of its domain of definition $\RR^+:=(0,+\infty)$. It is important to clarify here that there is no restriction \textit{\`a priori} on the choice of $a$; it could assume any real value (positive or negative) with appropriate consideration in the update rule. However, whatever the choice of $a$ is made, one must conform to the domain of definition of the derivative, i.e., $w_n(l)> a$ to avoid getting complex outputs. Thus, for $a=0$, the value $w_n(l)<0$ does not conform to the domain of definition and consequently, the algorithm furnishes absurd complex outputs.

A very similar observation can be made for the update rules of the type \eqref{FLMSRule2}. As we have specified in Section \ref{ss:derivation2}, the left Riemann-Liouville derivative $_{w_{n-1}(l)}{^LD}^\alpha_{w_n(l)}$, defined through \eqref{LRL}, was used to derive the iterate-update rule \eqref{FLMSRule2}. Therefore, the tacitly made (but unannounced) Assumption (B1) was that 
\begin{align}
w_{n-1}(l)< w_{n}(l),\qquad\text{for all}\quad 0\leq l\leq N-1,\quad n\in\mathbb{N},
\end{align}
that is, the sequence $(w_n(l))_{n\in\mathbb{N}}$ is pointwise monotone strictly  increasing for every $l$.  However, in case if the optimal weight $w^\star(l)$ was negative, and the algorithm was initialized by $0\leq w_0(l)< w_1(l)=1$, the sequence $(w_n(l))_{n\in\mathbb{N}}$ from \eqref{FLMSRule2} should be decreasing so that it could converge to $w^\star(l)<0$. On the other hand, if one assumes for the sake of argument that the algorithm is convergent, even then, a subsequence will be decreasing most likely since the LMS algorithm does not take a straight path towards the optimal solution, i.e., we may find ourselves in a situation where $w_{n-1}(l)> w_n(l)$ for some $l$ and $n$. This observation  is not restricted only to negative optimal weights. Indeed, if the positive weights are sought and we initialize the iterates with a positive value greater than the sought weight  for some $0\leq l \leq N-1$, we face the same issue. In all these situations, the algorithm will provide complex outputs. Towards this end,  whenever $w_n(l)<w_{n-1}(l)$, 
\begin{align}
_{w_{n-1}(l)}{^LD}^\alpha_{w_n(l)}[w_n(l)-w_{n-1}(l)]
=&\frac{1}{\Gamma(1-\alpha)}\frac{d}{dw_{n}(l)}\int_{w_{n-1}(l)}^{w_n(l)}\frac{\tau-w_{n-1}(l)}{(w_n(l)-\tau)^\alpha}d\tau
\nonumber
\\
=& 
\frac{1}{\Gamma(1-\alpha)}\frac{d}{dw_{n}(l)}\int_{w_n(l)}^{w_{n-1}(l)}\frac{w_{n-1}(l)-\tau}{(w_n(l)-\tau)^\alpha}d\tau.
\end{align}
This indicates that $\tau\in(w_n(l), w_{n-1}(l))$ and, therefore, the denominator is complex because $w_n(l)-\tau<0$. This can also be verified by the expression \eqref{FracDerPowerL} in Theorem \ref{ThmDerivative}. Indeed, we have 
\begin{align}
_{w_{n-1}(l)}{^LD}^\alpha_{w_n(l)}[w_n(l)-w_{n-1}(l)]
=&\frac{\Gamma(2)}{\Gamma(2-\alpha)}\Big(w_n(l)-w_{n-1}(l)\Big)^{1-\alpha}\in\CC.
\end{align}
We conclude once again that the left Riemann-Liouville derivative \eqref{LRL} is used outside of its domain of definition. In fact, the definition \eqref{LRL} was valid for $w_n(l) - w_{n-1}(l)\in\RR^+$ but it was used for $w_n(l) - w_{n-1}(l)\in\RR^-$. 

\subsubsection{Remedies and Justification of the Absolute Value}

We discuss the simple case of \eqref{FLMSRuleCom}. The ensuing discussion is also valid for the case of \eqref{FLMSRule2} with $w_n(l)$ replaced by $w_n(l)-w_{n-1}(1)$ and $^L_0D^\alpha_{w_n(l)}$ replaced by $_{w_{n-1}(l)}{^LD}^\alpha_{w_n(l)}$.

There are two possible ways to handle the problem of complex outputs. The first one is to identify a suitable lower limit $a$ such that
$$
\min_{l}w^\star(l)> a, \qquad 0\leq l\leq N-1,
$$
using \textit{\`a priori} information about the system. Unfortunately, it is not practical to identify such a suitable lower limit $a$ in the broader context of adaptive signal processing and general applications. The other way is to use the emergence of the complex outputs in the existing setup (with $a=0$) as an indication that \textit{one is going out of the domain of definition of the left-Riemann Liouville derivative}, i.e., 
\begin{align}
w_n(l)<0 \quad \text{whenever}\quad ^L_0D^\alpha_{w_n(l)}[e^2_n] \in\mathbb{C}\,\,\text{ or equivalently}\,\,\Im\left\{^L_0D^\alpha_{w_n(l)}[e^2_n]\right\}\neq 0.\label{Ccriterion}
\end{align}
Based on this criterion, one could take once again two possible routes discussed below.
\begin{enumerate} 
\item We first find an appropriate lower limit  $a<0$ so that 
\begin{align} 
\min_{l, n} w_n(l)> a, \qquad 0\leq l \leq N-1, \quad n\in\mathbb{N}.
\end{align}
Once such a number `$a$' is chosen, completely relaunch the entire algorithm by using the derivatives over the domain of definition interval $(a,w_n(l))$ in \eqref{LRL}.  Towards this end, one can set $a(n):=\ds\min_{l} w_n(l)$. However, this choice may not work for all $n$ because $\bw_n$ does not take a straight path from a time instance $n$ to another time instance $n+1$ to reach the optimal solution $\bw^\star$. Moreover, it may necessitate relaunching of the algorithm multiple times that will significantly increase computation cost as well as the efficiency of the algorithm. So, it is not practically suitable to adopt this remedy.

\item The second way is to make use of the \textit{right Riemann-Liouville derivative} \eqref{RRL} that has a  variable lower limit and a fixed upper limit at $b$, i.e., it has domain of definition $(t,b)$. When the criterion \eqref{Ccriterion}  indicates that $w_n(l)<0$, it will be suitable to apply right Riemann-Liouville derivative over the interval $(w_n(l), 0)$ because $w_n(l)$ will conform well to its domain of definition. The idea is to use the appropriate definition of the fractional derivative according to each input $w_n(l)$. The issue here is that the criterion \eqref{Ccriterion} needs to be verified on term-to-term bases, i.e., a decision has to be made for each and every $l$ and $n$, which is of course hectic. A quick fix to this is to identify what change will it bring to the update rule when we use right  derivative \eqref{RRL}  instead of the left one \eqref{LRL}. Towards this end, it can be easily seen that when \eqref{RRL} is used, the update rule \eqref{FLMSRuleCom} will read as
\begin{align}
w_{n+1}(l) =w_{n}(l) + \mu_{\ell}\, e_n\, x_{n-l} + \frac{\mu_f}{\Gamma(2-\alpha)} \,e_n x_{n-l} \Big(-w_n(l)\Big)^{1-\alpha}.
\label{FLMSRuleCom2}
\end{align}
Therefore, the fixed algorithm that switches the definition of the derivative term-by-term according to the criterion \eqref{Ccriterion} can be written by combining \eqref{FLMSRuleCom} and \eqref{FLMSRuleCom2} as 
\begin{align}
w_{n+1}(l) =w_{n}(l) + \mu_{\ell}\, e_n\, x_{n-l} + \frac{\mu_f}{\Gamma(2-\alpha)} \,e_n x_{n-l} 
\begin{cases}
\ds  \Big(w_n(l)\Big)^{1-\alpha}, & w_n(l)\geq 0,
\\
\ds  \Big(-w_n(l)\Big)^{1-\alpha}, & w_n(l) < 0,
\end{cases}
\end{align}
or equivalently 
\begin{align}
w_{n+1}(l) =
w_{n}(l) + \mu_{\ell}\, e_n\, x_{n-l} + \frac{\mu_f}{\Gamma(2-\alpha)} \,e_n x_{n-l} \left|w_n(l)\right|^{1-\alpha}.
\end{align}
\end{enumerate}
This justifies the use of an absolute value in the update rules \eqref{FLMSRuleComMod} and \eqref{FLMSRule2Mod}. Hence, Assumptions (A1) and (B1) are no more restrictive if the absolute values are used in the iterate-update rules. 

\subsection{Geometric Evaluation of the Fractional Learning Algorithms}\label{ss:geometry}
Let us now discuss the idea of using fractional derivatives in learning algorithms from a geometrical point of view.  It will also help us understand the relationship, if there is any, between the so-called \emph{fractional critical points} and the \emph{ordinary critical points} of a function.

\subsubsection{Geometrical Interpretation of  the Fractional Derivatives}

The geometrical interpretation of a fractional derivative is still unsettled and debatable despite being a centuries-old concept.  No generally acceptable geometric explanation has been provided yet since the appearance of the idea. Only a few vague interpretations have appeared so far that are far from being universally acceptable and practically functional. No solid connection is established in the literature between the fractional derivatives and the extreme or critical points of a sufficiently smooth function, unlike classical integer-order derivatives. It is well-known that the integer-order derivatives of a function $\varphi:\RR\to\RR$ are, specifically, local (pointwise defined) and are linked to the geometry of $\varphi$ and thus, have a clear geometrical meaning. Precisely, they provide suitable information about the behavior of the graph of $\varphi$, e.g., the regions where $\varphi$  is increasing, decreasing, concave, or the points where $\varphi$ has extreme values, inflections, cusps, vertical tangent, and so on. On the other hand, the fractional derivatives are non-local being defined in terms of an improper integral and have so-called \emph{memory} characteristics. Therefore, they provide very little punctual geometrical insight, at least regarding the behavior of the geometry of $\varphi$. We invite the interested readers to go through the articles \cite{Herrmann, Hilfer, Podlubny2, Tarasov2, Tavassoli} for detailed discussions regarding the geometrical and physical interpretations of the fractional derivatives. 

The conventional integer-order gradient vector has a geometrical and physical significance that has been vital in the success of the SDA. The gradient, $\nabla\varphi$ of a smooth function $\varphi$, defined by 
\begin{align}
\nabla_\bu  \varphi:= \begin{pmatrix}
\ds\frac{\partial \varphi}{\partial u_1} & \ds\frac{\partial \varphi}{\partial u_2} & \cdots & \ds \frac{\partial \varphi}{\partial u_{N}}
\end{pmatrix}^\top\in\RR^N, \quad\text{for any }\,\, \bu\in\RR^N,
\end{align}
points towards the direction in which $\varphi$ assumes its most pronounced increase in the slope and its length effectively renders the value of that slope (see, e.g., Fig. \ref{Fig2}). At each given point, the SDA learns the direction of the steepest descent of the function by means of the gradient vector.  In contrast, the fractional gradient, defined by 
\begin{align}
\nabla^\alpha_\bu  \varphi:= \begin{pmatrix}
{ ^L_a}D_{u_1}^\alpha[\varphi] & { ^L_a}D_{u_2}^\alpha[\varphi] & \cdots & {^L_a}D_{u_{N}}^\alpha\left[\varphi\right]
\end{pmatrix}^\top\in\RR^N, \quad\text{for any }\,\, \bu\in\RR^N,
\end{align}
points towards a direction other than that of the integer-order gradient. 
Therefore, it is impossible for it to point towards the direction of  the most pronounced increase in the slope of $\varphi$, unless exponent $\alpha\to1$, when  $\nabla^\alpha\varphi\to\nabla\varphi$ consequently (see, Remark \ref{Rem1}). Hence, the fractional SDA, learning the direction of the steepest descent through fractional gradients, cannot theoretically converge to an extreme point faster than the conventional SDA.  
\begin{figure}[!htb]
\begin{center}
\includegraphics[width=0.5\textwidth]{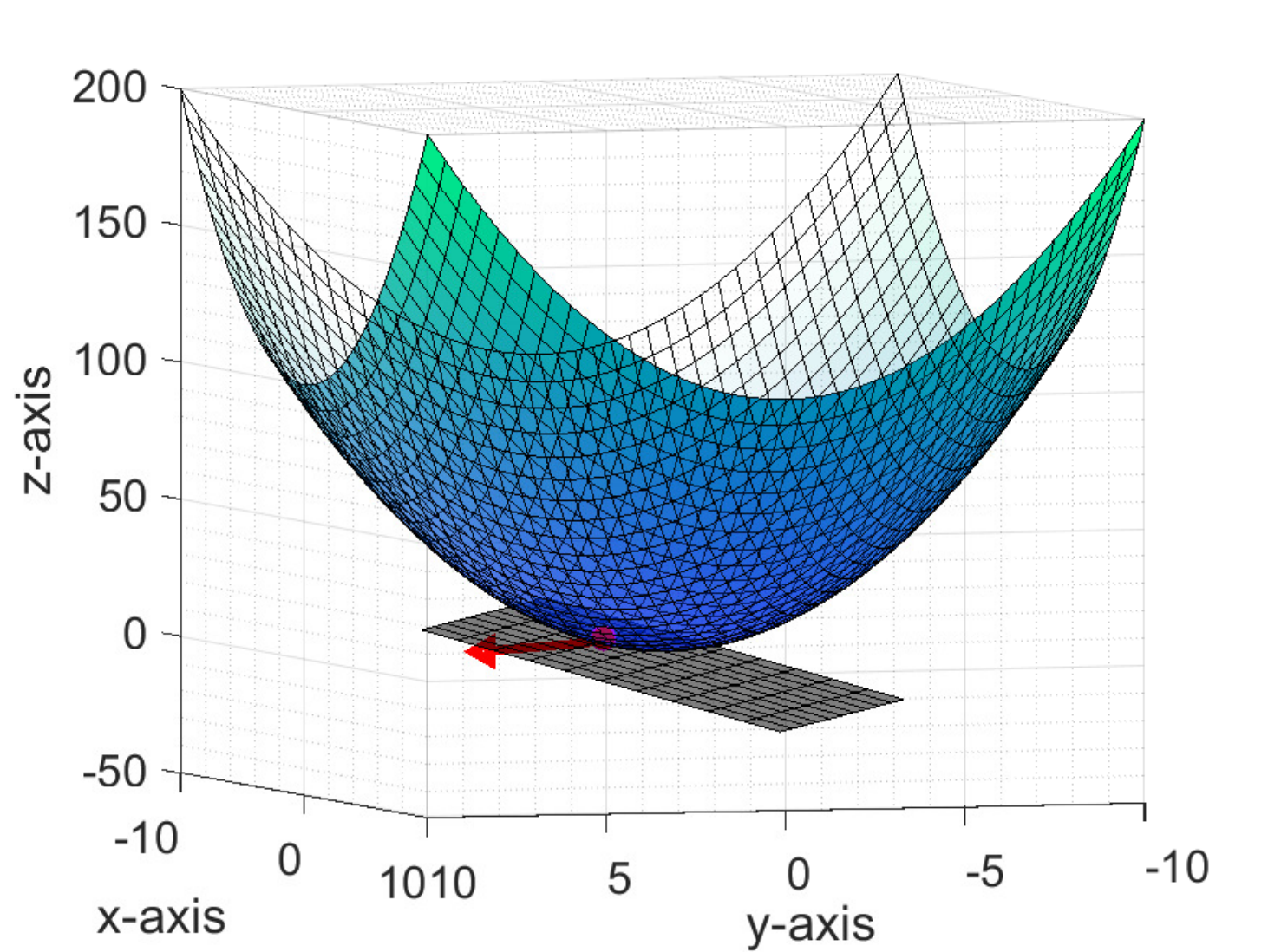}
\caption{Illustration of the gradient vector and the tangent plane at point $(1,2,5)$ of the function $z=x^2+y^2$.}\label{Fig2}
\end{center}
\end{figure}

\subsubsection{Fractional Extreme Points}\label{sss:Extreme}

Let us now explore the connection (if there is any) between the extreme points of a function with the so-called \emph{fractional extreme points}. We will also discuss the validity of Assumptions (A2) and (B2) about the constant terms and their influence on the fractional extreme points.

First of all, we note that the LMS iterate-update rule \eqref{LMSRule} renders an optimal solution $\bw^\star$ minimizing the sample MSE \eqref{MSE} when the instantaneous gradient vector $\nabla_{\bw_n}\hat{\mathcal{E}}(\bw_n)\to 0$. All such vectors $\bw_n\in\RR^N$ for which $\nabla_{\bw_n}\hat{\mathcal{E}}(\bw_n)=0$ are the critical points of the quadratic objective function $\hat{\mathcal{E}}(\bw_n) = e^2_n(\bw_n)$, where $e_n=d_n-\bw_n^\top \bx_n$ is the transmission error. Since, $\hat{\mathcal{E}}(\bw_n) = e^2_n(\bw_n)$ is a hyper-paraboloid, it has a unique critical point for which it achieves a minimum value. On contrary, the fractional rule \eqref{FSDRule} renders an optimal solution $\bw^\star_f$ that minimizes the sample MSE \eqref{MSE} when 
\begin{align}
\mu_\ell \nabla_{\bw_n}\hat{\mathcal{E}}(\bw_n)+\mu_f \nabla^\alpha_{\bw_n}\hat{\mathcal{E}}(\bw_n)= 0.
\label{frac-critical-cond}
\end{align}
We introduced the subscript $f$ in $\bw^\star_f$ in order to distinguish it from the classical optimal solution $\bw^\star$, where $f$ stands for \emph{fractional}. We will show in a while using a simple example (see, Example \ref{example}) that \eqref{frac-critical-cond} is not generally satisfied by the optimal solution $\bw^\star$. In other words, the fractional and conventional gradients have different critical points and thus, $\bw^\star$ and $\bw^\star_f$ are different.

For simplicity and clarity of our arguments in ensuing discussion,  we fix $\mu_\ell=0$ (i.e., only fractional gradient is used in fractional algorithms) and investigate the points where $\nabla^\alpha_{\bw_n}\hat{\mathcal{E}}(\bw_n)=0$. First, we consider the update rule \eqref{FLMSRuleCom} with $a=0$.  Evidently, it implies that 
\begin{align} 
0 = _0^LD^\alpha_{w_n(l)}[e^2_n(\bw_n)]=_0^LD^\alpha_{w_n(l)}[(d_n-\bw_n^\top\bx_n)^2],\qquad\text{for all} \quad 0\leq l\leq N-1.
%\label{Mid-eq1}
\end{align}
Consider the one-dimensional case (i.e., $N=1$) for simplicity, and set $\bw_n=(w_n), \bx_n=(x_n)\in\RR^1$. Then, from Theorem \ref{ThmDerivative}, 
\begin{align}
{ ^L_0}D^\alpha_{w_n}\Big[(d_n-w_n x_n)^2\Big]
=&d_n^2 \Big({ ^L_0}D^\alpha_{w_n}[1]\Big)-2d_n x_n \Big({ ^L_0}D^\alpha_{w_n}[w_n]\Big) +  x_n^2\Big({ ^L_0}D^\alpha_{w_n}[w_n^2]\Big)
\nonumber 
\\
=& \frac{d_n^2}{\Gamma(1-\alpha)}w_n^{-\alpha}
 -\frac{2\Gamma(2)d_nx_n }{\Gamma(2-\alpha)}w_n^{1-\alpha}+\frac{x_n^2\Gamma(3) }{\Gamma(3-\alpha)}w_n^{2-\alpha}
\nonumber
\\
=& \frac{w_n^{-\alpha}}{\Gamma(3-\alpha)} \Big((2-\alpha)(1-\alpha)d_n^2
 -2(2-\alpha)d_nx_nw_n+2x_n^2 w_n^{2}\Big).
 \label{Mid1}
\end{align}
This furnishes a quadratic equation for critical points by setting ${ ^L_0}D^\alpha_{w_n}\Big[(d_n-w_n x_n)^2\Big]=0$ as 
\begin{align}
2x_n^2 (w_f^\star)^{2}
 -2(2-\alpha)d_nx_n (w_f^\star)+(2-\alpha)(1-\alpha)d_n^2=0.
 \label{QuadraticEq}
\end{align}
Hence, there are two critical points for the fractional gradient given by 
\begin{align}
w_f^\star=& \frac{(2-\alpha)d_nx_n\pm \sqrt{(2-\alpha)^2d_n^2x_n^2-2(1-\alpha)(2-\alpha)d_n^2x_n^2 }}{2x_n^2}
\nonumber
\\
=& 
 \frac{d_n}{2x_n}\Big((2-\alpha)\pm \sqrt{(2-\alpha)^2-2(1-\alpha)(2-\alpha)}\Big)
\nonumber
\\
=& 
 \frac{d_n}{2x_n}\Big((2-\alpha)\pm \sqrt{\alpha(2-\alpha)}\Big).
 \label{Roots}
\end{align}
It is worthwhile mentioning that the conventional integer-order gradient of the quadratic objective function $e_n^2$ has only  one critical point $w^\star=d_n/x_n$. 

\begin{rem}\label{Rem4}
Following remarks are in order. 
\begin{enumerate}
\item The fractional gradient always has two distinct critical points $w_f^\star$ for $\alpha\in(0,1)$ and both of them are different from the critical point $w^\star$ of the conventional gradient. Moreover, one of $w_f^\star$ approaches to  $w^\star=d_n/x_n$ while the other approaches to $0$ as $\alpha\to 1$.

\item It is not clear which one of $w_f^\star$ given in \eqref{Roots}  would the fractional LMS algorithm converge to in practice. It appears that the fractional algorithm is highly sensitive to the choice of the initial guess, and that will dictate the convergence of the algorithm to any one of those critical points. 

\item The roots $w^\star_f$ given in \eqref{Roots} are not only dependent on the objective function but also on the choice of the exponents, i.e., if we vary $\alpha$ then the fractional critical points of the objective function also vary accordingly.  Hence, the fractional exponent $\alpha$  induces significant deviations in the output of the fractional derivative by contributing to the steady-state error variably; instead of serving as an additional control parameter in the LMS algorithm for tailored convergence rate and enhancement of its performance.

\item Although $a=0$ is chosen above as the lower limit of the fractional integral, it can be seen readily that $a$ also influences the fractional critical points. Indeed, when $a\neq 0$, we will have fractional powers of $(w_n-a)$ instead of $w_n$ in the quadratic equation \eqref{QuadraticEq}. We elaborate further on this point through Example \ref{example}.

\item As for the influence of the constant terms on the critical points of the fractional gradients, note that while deriving \eqref{Mid1}, if we had removed the constant term $ \Big({ ^L_0}D^\alpha_{w_n}[d_n^2]\Big)$ as per Assumption (A2) then the quadratic equation \eqref{QuadraticEq} would have been 
\begin{align}
x_n^2w_n^2-(2-\alpha)d_nx_nw_n=0.
\label{QuadraticEq2}
\end{align}
This, in turn, would have furnished a set of two critical points $w_f^\star=0$ and $w_f^\star = (2-\alpha)d_n/x_n$. At one hand, it is clear that the constant terms have influence on the fractional critical point  $w^\star_f$, unlike $w^\star$ (see, Example \ref{example}). At the other hand, removing the derivatives of the additive constant terms can be a blessing in disguise as one of the roots of \eqref{QuadraticEq2} is always $w_f^\star= 0$ while the other one is  $w_f^\star=(2-\alpha)d_n/x_n =(2-\alpha)w^\star$ that is much similar to the true critical point of the conventional gradient, especially, when $\alpha\to 1$. 
\end{enumerate}
\end{rem}

We substantiate Remark \ref{Rem4} by a simple example below.
\begin{example} \label{example}
We choose a quadratic function $\varphi:[0,1] \to\RR$, defined by 
\begin{align}
\varphi(t):=2t^2- t+c, \qquad c\in\RR.
\label{phi}
\end{align}
It is trivially known that the only critical point of $\varphi$ is $t^\star=1/4$ irrespective of the choice of $c\in\RR$. Moreover, $\varphi(t)$ has a minimum at $t^\star$ since $\varphi''(1/4)>0$.  

We know that the true minimum point is $t^\star=1/4 >0$, so we can safely choose the lower limit of the fractional integral to be $0\leq a< 1/4$ (avoiding any breach of the domain of definition of fractional derivative) and look for fractional critical points $t^\star_f$ in the interval $(a,t)$. Our aim is to elaborate on the role of $a$, $c$, and $\alpha$ on $t^\star_f$. 

We first express $\varphi(t)$ as 
\begin{align*}
\varphi(t)=& 2((t-a)+a)^2-((t-a)+a)+ c  
= 2(t-a)^2+(4a-1)(t-a)+(2a^2-a+c),
\end{align*}
so that its  left Riemann-Liouville derivative of order $\alpha\in(0,1)$ over $(a,t)$ is 
\begin{align}
^L_aD^\alpha_t(\varphi(t)):=& 
%2\left({^L_aD^\alpha_t}[(t-a)^2]\right)+(4a-1)\left({^L_aD^\alpha_t}[(t-a)]\right)+\left(2a^2-a+c\right)\left({^L_aD^\alpha_t}[1]\right)
%\nonumber
%\\
%=&
\frac{4}{\Gamma(3-\alpha)}(t-a)^{2-\alpha}+\frac{(4a-1)}{\Gamma(2-\alpha)}(t-a)^{1-\alpha}+\frac{(2a^2-a+c)}{\Gamma(1-\alpha)}(t-a)^{-\alpha}
\nonumber
\\
=& \frac{(t-a)^{-\alpha}}{\Gamma(3-\alpha)}\Big(4(t-a)^2+(4a-1)(2-\alpha)(t-a)+(2a^2-a+c)(1-\alpha)(2-\alpha)\Big).
\end{align}
Therefore, the fractional critical points satisfy the equation
\begin{align}
4(t^\star_f-a)^2+(4a-1)(2-\alpha)(t^\star_f-a)+(2a^2-a+c)(1-\alpha)(2-\alpha)=0,
\end{align}
and are given by 
\begin{align}
t_f^\star = a+\frac{-(4a-1)(2-\alpha)\pm \sqrt{(4a-1)^2(2-\alpha)^2-16(2a^2-a+c)(1-\alpha)(2-\alpha)}}{8}.
\end{align}
After fairly easy manipulations, we get 
\begin{align}
t_f^\star = a+\frac{-(4a-1)(2-\alpha)\pm\sqrt{(2-\alpha)\Big[\alpha(4a-1)^2-2(8c-1)(1-\alpha)\Big]}}{8}.
\end{align}

One can draw following conclusions.

\begin{enumerate}
\item The fractional critical points depend on $\alpha, a$, and $c$. 
\item The points $t^\star_f$ are imaginary when 
\begin{align}
c>\frac{2(1-\alpha)+\alpha(4a-1)^2}{16(1-\alpha)},\label{cond-c}
\end{align}
i.e., there are no real points such that $ ^L_aD_t^\alpha\varphi(t) =0$. We remind here that additive constant $c$ had no role in the extreme values of the integer-order derivative. It substantiates our point that the additive constants have a significant influence on the optimal solution in the fractional LMS algorithm. 
Moreover, quadratic equation \eqref{QuadraticEq2} is a particular case always having two real distinct roots for all $\alpha\in (0,1)$ thanks to a specific choice of $c$ corresponding to the MSE objective functional. 

\item Both the roots $t^\star_f$ are different from $t^\star=1/4$. 

\item Finally, it is interesting that, for $a=0$,
\begin{align}
^L_0D_t^\alpha[\varphi(t^\star)] =& \frac{(t^\star)^{-\alpha}}{\Gamma(3-\alpha)}\Big(4(t^\star)^2-(2-\alpha)t^\star+c(1-\alpha)(2-\alpha)\Big)
\nonumber
\\
=&\frac{4^\alpha}{\Gamma(1-\alpha)}\left(c-\frac{1}{4(1-\alpha)}\right),
\label{eq:37}
\end{align}
which is strictly negative at the true critical point $t^\star=1/4$ for all  $c<{1}/{4(1-\alpha)}$. Note that, there are two real values of $t^\star_f$ for such $c$ since it  avoids the condition \eqref{cond-c}. This justifies our claim that \eqref{frac-critical-cond} does not hold in general at the true critical point. More precisely, the ordinary derivative at $t^\star$ is zero but the fractional derivative at $t^\star$ in \eqref{eq:37} is non-zero (except for $c=1/4(1-\alpha)$), and therefore, the sum in \eqref{frac-critical-cond} is also non-zero at $t^\star$. 
\end{enumerate}
\end{example}

We end this subsection with the following general result regarding the behavior of fractional derivatives at the true minimum that justifies the observation made in \eqref{eq:37}. 
\begin{thm}[{\cite[Theorem 2.4]{Refai}}]\label{Thm2}
If $\varphi:[0,1]\to \RR$ is a twice continuously differentiable function that attains its minimum at $t^\star\in(0,1)$ then
\begin{align}
^L_0D_t^\alpha[\varphi(t^\star)]\leq \frac{(t^\star)^{-\alpha}}{\Gamma(1-\alpha)}\varphi(t^\star), \qquad \alpha\in(0,1).
\end{align}
\end{thm}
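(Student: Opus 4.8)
The plan is to reduce the inequality to a statement about the auxiliary function $\psi(t):=\varphi(t)-\varphi(t^\star)$ and then exploit a Marchaud-type representation of the left Riemann--Liouville derivative. First I would observe that by Theorem~\ref{ThmDerivative} (with $\beta=1$ and $a=0$) one has $\tfrac{(t^\star)^{-\alpha}}{\Gamma(1-\alpha)}={}^L_0D_t^\alpha[1](t^\star)$, so by linearity of the fractional derivative and the fact that $\varphi(t^\star)$ is a constant, the asserted inequality is equivalent to
\begin{align*}
{}^L_0D_t^\alpha[\psi](t^\star)\leq 0, \qquad \psi:=\varphi-\varphi(t^\star).
\end{align*}
Note that $\psi\in C^2[0,1]$, that $\psi\geq 0$ on $[0,1]$ since $\varphi$ attains its minimum at $t^\star$, and that $\psi(t^\star)=0$ and $\psi'(t^\star)=0$ because $t^\star$ is an interior minimum of a differentiable function.

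Next I would establish the representation
\begin{align*}
{}^L_0D_t^\alpha[\psi](t)=\frac{\psi(t)}{\Gamma(1-\alpha)\,t^{\alpha}}+\frac{\alpha}{\Gamma(1-\alpha)}\int_0^t\frac{\psi(t)-\psi(\tau)}{(t-\tau)^{\alpha+1}}\,d\tau,\qquad t\in(0,1].
\end{align*}
This is obtained by writing ${}^L_0D_t^\alpha[\psi]=\tfrac{d}{dt}I_L^{1-\alpha}[\psi]$, substituting $u=t-\tau$ and differentiating under the integral sign to reach the Caputo-type form $\tfrac{\psi(0)t^{-\alpha}}{\Gamma(1-\alpha)}+\tfrac{1}{\Gamma(1-\alpha)}\int_0^t(t-\tau)^{-\alpha}\psi'(\tau)\,d\tau$, and then integrating by parts in the last integral using the antiderivative $\psi(\tau)-\psi(t)$ of $\psi'$. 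The boundary term at $\tau=t$ vanishes because $1-\alpha>0$, the boundary term at $\tau=0$ cancels the $\psi(0)t^{-\alpha}$ term, and what is left is the stated integral, which converges because near $\tau=t$ the numerator is $O(t-\tau)$ and hence the integrand is $O\big((t-\tau)^{-\alpha}\big)$ with $\alpha<1$. (Alternatively one may simply cite this identity from the literature on the Marchaud derivative, e.g.\ \cite{Kilbas,Podlubny}, and skip its derivation.)

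Finally I would evaluate the identity at $t=t^\star$. The first term vanishes because $\psi(t^\star)=0$. In the integral the numerator is $\psi(t^\star)-\psi(\tau)=-\psi(\tau)\leq 0$ for every $\tau\in(0,t^\star)$, the denominator $(t^\star-\tau)^{\alpha+1}$ is positive, and the prefactor $\alpha/\Gamma(1-\alpha)$ is positive for $\alpha\in(0,1)$; hence the right-hand side is $\leq 0$, which is precisely ${}^L_0D_t^\alpha[\psi](t^\star)\leq 0$, and unwinding the reduction gives the theorem. Here the hypothesis $\psi\in C^2$ with $\psi(t^\star)=\psi'(t^\star)=0$ makes the integrand behave like $(t^\star-\tau)^{1-\alpha}$ near $\tau=t^\star$, so convergence of the integral at $t=t^\star$ is automatic.

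I expect the main obstacle to be the rigorous derivation of the Marchaud-type representation, namely justifying the differentiation under the integral sign and the integration by parts against the singular kernel $(t-\tau)^{-\alpha}$, together with the bookkeeping of the vanishing and cancelling boundary terms; once that identity is in place, the sign analysis at $t^\star$ is immediate.
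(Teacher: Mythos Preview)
The paper does not supply its own proof of this theorem; it merely cites it from \cite{Refai}. Your argument is correct and self-contained: the reduction to $\psi=\varphi-\varphi(t^\star)$ via linearity and Theorem~\ref{ThmDerivative} is valid, the Marchaud-type identity you derive is standard (and your integration-by-parts bookkeeping is accurate, with the $\psi(0)t^{-\alpha}$ terms cancelling as you describe), and the sign analysis at $t^\star$ together with the $C^2$ hypothesis giving $\psi(\tau)=O((t^\star-\tau)^2)$ near $\tau=t^\star$ ensures both nonpositivity and convergence of the singular integral.

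For comparison, the original proof in \cite{Refai} is organized slightly differently: it first establishes the analogous inequality for the Caputo derivative, ${}^C_0D_t^\alpha[\varphi](t^\star)\leq \tfrac{(t^\star)^{-\alpha}}{\Gamma(1-\alpha)}\big(\varphi(t^\star)-\varphi(0)\big)$, by an integration-by-parts argument of the same flavor, and then passes to the Riemann--Liouville derivative via the relation ${}^L_0D_t^\alpha[\varphi]={}^C_0D_t^\alpha[\varphi]+\tfrac{\varphi(0)}{\Gamma(1-\alpha)}t^{-\alpha}$, which causes the $\varphi(0)$ terms to cancel. Your route via the Marchaud representation bypasses the Caputo intermediary and reaches the conclusion in one step; the two arguments are essentially reorganizations of the same computation.
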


\subsubsection{Short-Memory Characteristic}\label{sss:short}

It has been mentioned earlier that the fractional derivatives are non-local because they are defined in terms of integrals. Therefore, they have a so-called \emph{long-memory} characteristic, i.e., they carry forward  information of all the past states. As a way to deal with this non-locality issue, iterate-update rules of the form \eqref{FLMSRule2} were designed by iterating the lower limit of the fractional derivative \eqref{LRL} for each time instance $n$ so that only a short memory of the fractional derivative could be retained (the so-called \emph{short-memory characteristic}). A natural choice of the variable limit was $a=a_{n,l}=w_{n-1}(l)$. Therefore, ${_{w_{n-1}(l)}}{^LD}^\alpha_{w_n(l)}\left[\hat{\mathcal{E}}(\bw_n) \right]$ was used in fractional LMS iterate-update rule \eqref{FLMSRule2}. Below, we discuss the critical values corresponding to this fractional derivative to elaborate on the role of this short-term memory effect on the performance of the fractional LMS algorithms. 

As in the Section \ref{sss:Extreme},  we consider the one-dimensional case for simplicity. Using the form \eqref{Expanded-E-2} of the sample MSE, we evaluate the fractional derivative as 
\begin{align}
{_{w_{n-1}}}&{^LD}^\alpha_{w_n}\Big[(d_n-w_n x_n)^2\Big]
\nonumber
\\
=&\Phi_n^2(0) \Big({_{w_{n-1}}}{^LD}^\alpha_{w_n}[1]\Big)-2\Phi_n(0) x_n \Big({_{w_{n-1}}}{^LD}^\alpha_{w_n}[\Delta w_n]\Big) 
+  x_n^2\Big({_{w_{n-1}}}{^LD}^\alpha_{w_n}[(\Delta w_n)^2]\Big)
\nonumber 
\\
=& \frac{\Phi_n^2(0)}{\Gamma(1-\alpha)}\left[\Delta w_n\right]^{-\alpha}
 -\frac{2\Phi_n(0)x_n }{\Gamma(2-\alpha)}\left[\Delta w_n\right]^{1-\alpha}+\frac{2x_n^2}{\Gamma(3-\alpha)}\left[\Delta w_n\right]^{2-\alpha}
\nonumber
\\
=& \frac{\left[\Delta w_n\right]^{-\alpha}}{\Gamma(3-\alpha)} \Bigg((2-\alpha)(1-\alpha)\Phi_n^2(0)
-2(2-\alpha)\Phi_n(0)x_n\left[\Delta w_n\right]+2x_n^2 \left[\Delta w_n\right]^{2}\Bigg),
 \label{Mid2}
\end{align}
where $\Phi_n(l)$ is given in \eqref{Phi_n} and $\Delta w_n:=(w_n-w_{n-1})$.  This furnishes a quadratic equation for critical points $w^\star_f$ by setting the fractional derivative in \eqref{Mid2} to zero, i.e.,
\begin{align}
2x_n^2 (w_f^\star -w_{n-1})^{2} -2(2-\alpha)\Phi_n(0)x_n (w_f^\star -w_{n-1})+(2-\alpha)(1-\alpha)\Phi_n^2(0)=0.
 \label{QuadraticEq3}
\end{align}
Remark that the quadratic equation \eqref{QuadraticEq3} is essentially equivalent to  \eqref{QuadraticEq} with $w_f^\star$ and $d_n$ replaced with $w_f^\star -w_{n-1}$ and $\Phi_n^2(0)$, respectively.  
Hence, there are two critical points for the fractional gradient given by 
\begin{align}
w_{f,\pm,n}^\star=& w_{n-1}+
 \frac{d_n-w_{n-1}x_n}{2x_n}\Big((2-\alpha)\pm \sqrt{\alpha(2-\alpha)}\Big),\label{Roots2}
\end{align}
and the conclusions drawn in Remark \ref{Rem4} in Section \ref{sss:Extreme} are also relevant to the algorithms designed with short memory characteristics. However, now there are two sequences of critical points $(w_{f,\pm, n}^\star)$ (as the fractional derivative changes each time due to the variable lower limit $a_{n,l}=w_{n-1}(l)$). 

There is no apparent big difference between the algorithm derived using short memory characteristic due to variable initial terms and the one without it. However, this is not true. The difference lies with the convergence guarantee and the steady state error produced by the two variants. As will be discussed in Section \ref{ss:Convergence}, the fractional part of the algorithm \eqref{FLMSRuleComMod} approaches zero as $n\to \infty$ and the convergence of the algorithm is achieved thanks to the integer-order gradient part. The fractional part merely contributes to the steady state error as it seldom becomes zero at the true extreme point in finite time (see Observation $4$ in Example \ref{example} and Theorem \ref{Thm2}). On contrary, the short memory algorithms of the form \eqref{FLMSRule2Mod} usually do not have integer gradient part (i.e., $\mu_\ell=0$), but they have guaranteed convergence to the Wiener solution in stationary cases under the assumption of the convergence of the LMS solution thanks to the short memory characteristic; see, Proposition \ref{Prop2}. However, their rate of convergence is slower than the standard LMS algorithm due to their construction under Assumption (B2); see Remark \ref{Rem3} and Example \ref{example2}. Moreover, the critical points corresponding to the rule \eqref{FLMSRule2Mod}, $w^\star_{f,\pm,n}$, both approach to $w^\star$ as $n\to \infty$ for any $\alpha\in(0,1)$, unlike those corresponding to \eqref{FLMSRuleComMod}.

%\subsection{Redrafted Algorithm}

\subsection{Convergence Analysis}\label{ss:Convergence}
Let us first discuss the convergence of the algorithm \eqref{FLMSRule2Mod}. 

\begin{prop}\label{Prop2}
Under the assumption that the sequence $(\bw_n)_{n\in \mathbb{N}}$ given by the update rule \eqref{LMSRule} converges to the Wiener solution $\bw^\star$, the sequence $(\bw^\star_{f,\pm,n})_{n\in\mathbb{N}}$ also converges to $\bw^\star$. 
\end{prop}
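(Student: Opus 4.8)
The plan is to pass to the limit in the explicit formula \eqref{Roots2} for the fractional critical points and exploit the hypothesis that the classical LMS iterates converge. First I would recall that, in the one-dimensional case, the two sequences of fractional critical points are given explicitly by
\begin{align*}
w_{f,\pm,n}^\star = w_{n-1} + \frac{d_n - w_{n-1}x_n}{2x_n}\Big((2-\alpha)\pm\sqrt{\alpha(2-\alpha)}\Big),
\end{align*}
and that the analogous vector identity holds component-wise with $d_n$ replaced by the relevant quantity $\Phi_n(l)$, which by Remark \ref{Rem3} satisfies $\Phi_n(l)\approx e_n$ when the step size is chosen so that $w_n(l)\approx w_{n-1}(l)$. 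The key observation is that convergence of $(\bw_n)$ to $\bw^\star$ forces $\Delta w_n := w_n - w_{n-1}\to 0$ and, in the stationary setting, the error term driving the update tends to zero as well, so the correction term added to $w_{n-1}$ in the displayed formula vanishes in the limit.

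The main steps, in order, would be: (i) use the hypothesis $\bw_n\to\bw^\star$ to conclude $w_{n-1}(l)\to w^\star(l)$ for each $l$, so the leading term $w_{n-1}(l)$ in \eqref{Roots2} already converges to the desired limit; (ii) show the bracketed correction term tends to $0$. For (ii) the natural route is to identify the numerator $d_n - w_{n-1}(l)x_{n-l}$ (or more precisely $\Phi_n(l)$, which differs from $e_n$ by a term proportional to $\Delta w_n(l)\to 0$) with the instantaneous error, and argue that in the stationary regime, once $\bw_n$ has converged to the Wiener solution, the update increment $\mu_f e_n \mathbf{F}_\alpha(\bw_n-\bw_{n-1})\bx_n$ in \eqref{FLMSRule2Vec} must itself go to zero since $\bw_{n+1}-\bw_n\to 0$; equivalently, $e_n\,[\Delta w_n(l)]^{1-\alpha}\to 0$. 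Combined with $|\Delta w_n(l)|<1$ from Assumption (B3), one gets that the relevant product in the correction term of \eqref{Roots2} vanishes, because the factor $(2-\alpha)\pm\sqrt{\alpha(2-\alpha)}$ is a bounded constant and $\Delta w_n(l)\to 0$ kills the $\tfrac{d_n - w_{n-1}(l)x_{n-l}}{2x_{n-l}}$ term once one notes $\Phi_n(l)\to 0$ at the Wiener solution in the stationary case (the residual error has zero mean and, under the convergence hypothesis, the instantaneous error driving the vanishing increment tends to zero). Then (iii) assemble: $w_{f,\pm,n}^\star(l) = w_{n-1}(l) + o(1) \to w^\star(l)$, and since this holds for every component and for both sign choices, $(\bw^\star_{f,\pm,n})\to\bw^\star$.

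The delicate point — and the step I expect to be the real obstacle — is step (ii): rigorously justifying that the numerator $\Phi_n(l)$, equivalently the instantaneous error $e_n$, tends to zero. In a genuinely stochastic environment $e_n$ need not go to zero pointwise even if $\bw_n\to\bw^\star$, because of the measurement-noise floor; what is true is that the \emph{driving increment} $\bw_{n+1}-\bw_n\to 0$. So the cleanest argument is probably to avoid claiming $e_n\to 0$ directly and instead argue from \eqref{FLMSRule2Vec}: convergence of $(\bw_n)$ gives $\bw_{n+1}-\bw_n\to 0$, hence $\mu_f e_n [\Delta w_n(l)]^{1-\alpha} x_{n-l}\to 0$ component-wise, and since $\alpha\in(0,1)$ this yields $e_n[\Delta w_n(l)]^{1-\alpha}\to 0$; but the correction term in \eqref{Roots2} involves $e_n \Delta w_n(l)$ (i.e.\ the power $1$, not $1-\alpha$), so one further uses $|\Delta w_n(l)|<1$ to write $|e_n\,\Delta w_n(l)| = |e_n\,[\Delta w_n(l)]^{1-\alpha}|\cdot|[\Delta w_n(l)]^{\alpha}| \le |e_n[\Delta w_n(l)]^{1-\alpha}|\to 0$. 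This chain closes the gap without needing $e_n\to 0$ on its own, which is the subtlety worth spelling out carefully in the actual proof.
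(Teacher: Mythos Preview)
Your overall plan --- pass to the limit in formula \eqref{Roots2}, split into the leading term $w_{n-1}$ and the correction term, and show the latter vanishes --- is exactly what the paper does, and your step (i) is correct. The paper's argument for step (ii) is a single line: in the stationary one-dimensional setting the Wiener solution is $w^\star = d_n/x_n$, so $d_n - w_{n-1}x_n = x_n(w^\star - w_{n-1}) \to 0$ directly from the hypothesis $w_{n-1}\to w^\star$. You in fact state this yourself (``$\Phi_n(l)\to 0$ at the Wiener solution in the stationary case''), and that is already the complete proof.

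Where your proposal goes wrong is the ``delicate point'' paragraph. First, it is unnecessary: the proposition is framed in the stationary (deterministic) setting, so the stochastic-noise concern does not arise and $\Phi_n\to 0$ holds on the nose. Second, and more seriously, the workaround you propose is based on two errors. You invoke the fractional update rule \eqref{FLMSRule2Vec} to deduce $e_n[\Delta w_n]^{1-\alpha}\to 0$, but the hypothesis of the proposition concerns the \emph{LMS} sequence governed by \eqref{LMSRule}, not the fractional one; the relevant consequence of convergence is $\mu_\ell e_n x_n = w_{n+1}-w_n\to 0$. More fatally, you then claim that ``the correction term in \eqref{Roots2} involves $e_n\Delta w_n(l)$'' and bootstrap via $|e_n\Delta w_n| \le |e_n[\Delta w_n]^{1-\alpha}|$. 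But the correction term in \eqref{Roots2} is $\dfrac{d_n - w_{n-1}x_n}{2x_n}\cdot C$; it contains no factor of $\Delta w_n$ whatsoever, so this inequality chain is bounding the wrong quantity and does not establish what is needed. Simply drop the final paragraph and keep the direct observation that $d_n - w_{n-1}x_n \to 0$ in the stationary regime.
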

\begin{proof}
We prove the result for one-dimensional case. We refer to \cite{chen2017study} for more detailed discussions on the general case. Let $\ds\lim_{n\to\infty}w_n=w^\star$. Then, it can be seen that the sequence $(w^\star_{f,\pm,n})_{n\in\mathbb{N}}$ also converges to $w^\star$. Indeed, from \eqref{Roots2}, 
\begin{align}
\ds\lim_{n\to\infty} w^{\star}_{f,\pm,n} = \lim_{n\to\infty}w_{n-1}+\ds\lim_{n\to\infty}\frac{d_n-w_{n-1}x_n}{2x_n}\Big((2-\alpha)\pm\sqrt{\alpha(2-\alpha)}\Big) = w^\star.
\end{align}
\end{proof}
As we mentioned in the previous section, we substantiate our claim that the convergence of the algorithm \eqref{FLMSRule2Mod} is slower than the standard LMS algorithm through a simple example below. 
\begin{example}\label{example2}
We choose a quadratic function $\varphi_2:[0,1] \to\RR$, defined by 
\begin{align}
\varphi_2(t):=(2t-3)^2, 
\label{phi2}
\end{align}
that has the only critical point $t^\star=3/2$ where it is minimum. In Fig. \ref{Fig3}, we show the performance of the fractional algorithm
\begin{align}
t_{n+1}=t_n-\frac{\mu_f}{2}\, {_{t_{n-1}}}{^LD}^\alpha_{t_n}\left[\varphi_2(t_{n})\right]. 
\end{align}
It can be observed that the solution converges to the true extreme point $t^\star=3/2$. However, the convergence is very slow as anticipated in Remark \ref{Rem3}. 

\begin{figure}[!htb]
\begin{center}
\includegraphics[width=0.4\textwidth]{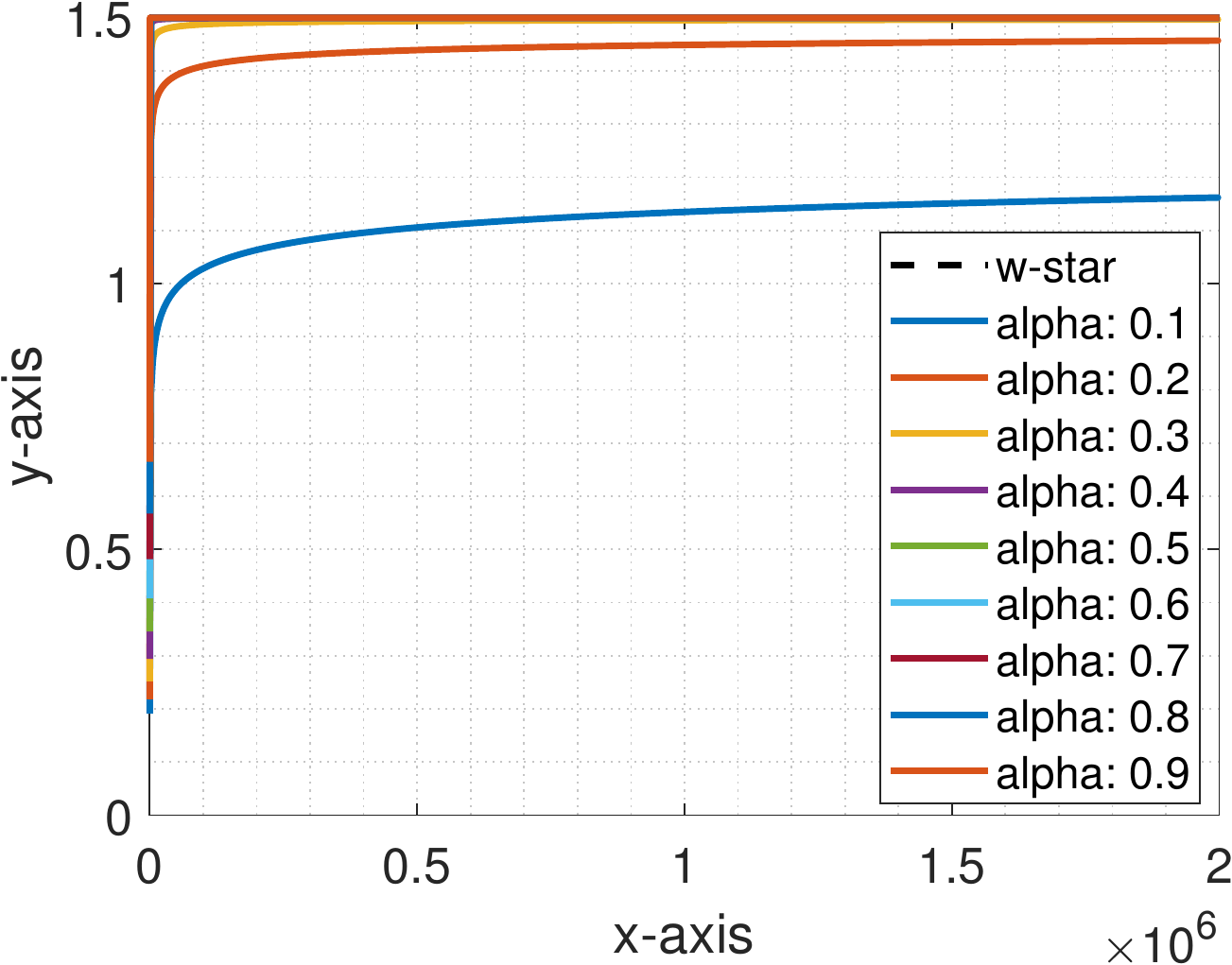}
\hspace{0.05\textwidth}
\includegraphics[width=0.4\textwidth]{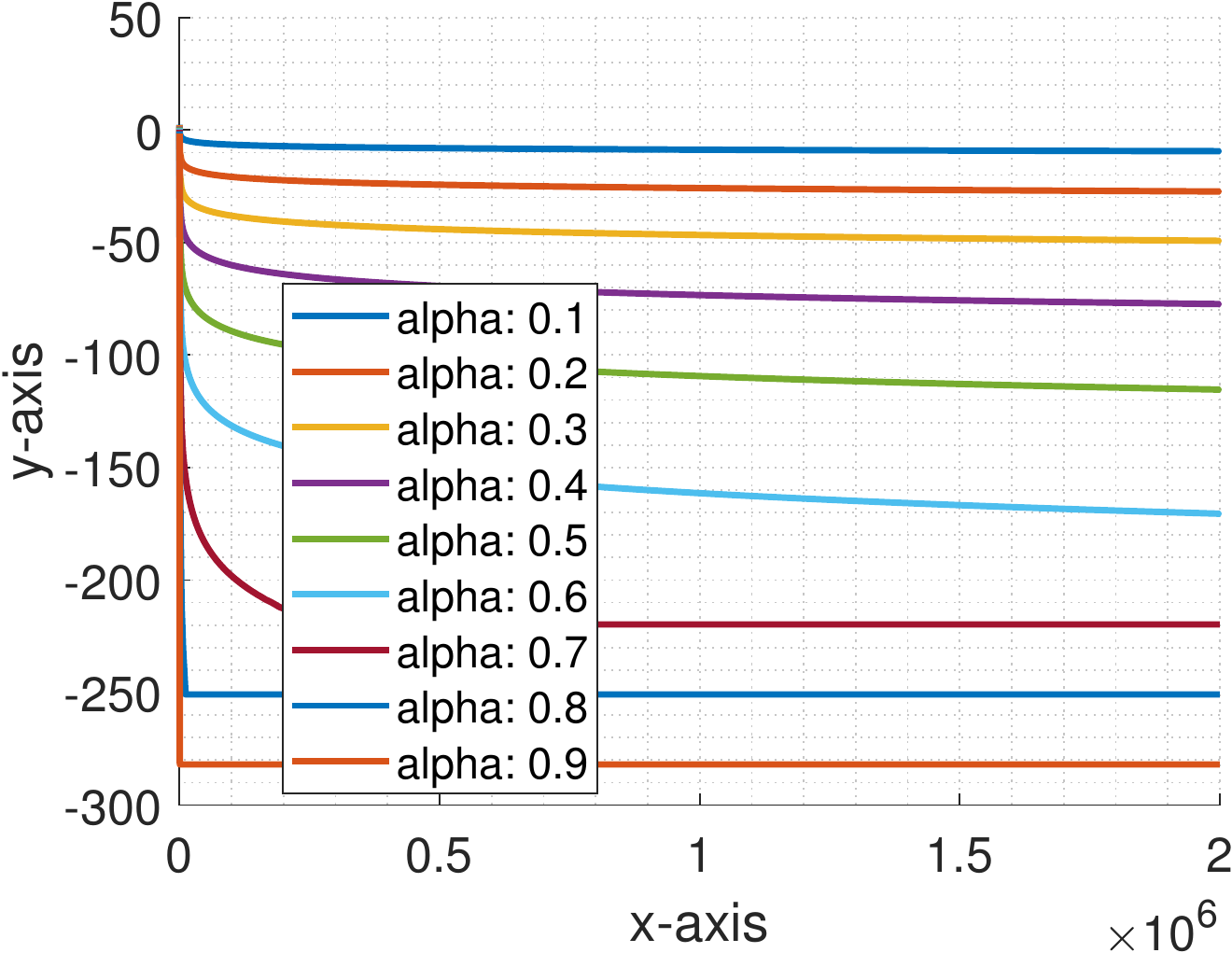}
\caption{Illustration of Example \ref{example2}. Left: Optimal Solution. Right: Learning Curves.}\label{Fig3}
\end{center}
\end{figure}
\end{example}

Let us now discuss the convergence of the algorithm \eqref{FLMSRuleComMod}. Towards this end, we have the following lemma. 
\begin{lem}\label{Prop1}
If the sequence $(\bw_{n})_{n\in\NN}$ given by the update rule \eqref{LMSRule} converges to the Wiener solution $\bw^\star$ then the sequence $(\bu_n)_{n\in\NN}\subset \RR^N$, defined by
\begin{align}
u_n(l):=\mu_f e_nx_{n-l}|w_n(l)|^{1-\alpha}, \qquad 0\leq l\leq N-1,
\end{align} 
is also convergent and $\bu_n\to \mathbf{0}\in\RR^N$.
\end{lem}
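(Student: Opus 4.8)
The plan is to show $\mathbf{u}_n \to \mathbf{0}$ by analyzing the three factors in $u_n(l) = \mu_f\, e_n x_{n-l} |w_n(l)|^{1-\alpha}$ separately and arguing that the product stays bounded while the error factor vanishes. First I would observe that, by hypothesis, $\bw_n \to \bw^\star$, so each component $w_n(l)$ converges to the finite limit $w^\star(l)$; hence the sequence $(w_n(l))_{n\in\NN}$ is bounded, say $|w_n(l)| \le M_l$ for all $n$, and therefore $|w_n(l)|^{1-\alpha} \le M_l^{1-\alpha}$ is uniformly bounded in $n$ for each fixed $l$ (here I use $1-\alpha > 0$ since $\alpha \in (0,1)$, so $t \mapsto t^{1-\alpha}$ is continuous and increasing on $[0,\infty)$). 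The factor $x_{n-l}$ is a sample of the input process; in the standard LMS setting the input is assumed (wide-sense) stationary with finite power, so $x_{n-l}$ is bounded in the relevant (almost sure or mean-square) sense, and I would invoke this as part of the standing assumptions on the model from Section \ref{ss:SIP}.

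The crux is then the error term $e_n = d_n - \bw_n^\top \bx_n$. Since $\bw_n \to \bw^\star$ and $\bw^\star$ is the Wiener solution, $e_n$ converges (in the appropriate stochastic sense) to the optimal residual $e_n^\star = d_n - (\bw^\star)^\top \bx_n$. This by itself does \emph{not} force $e_n \to 0$ — the minimum MSE is generically positive — so the statement as phrased must be read either in the sense of convergence of $\bu_n$ to the corresponding optimal-residual quantity, or under the (idealized, noise-free) system-identification assumption that $d_n = (\bw^\star)^\top \bx_n$ exactly, in which case $e_n^\star = 0$. I would adopt the noise-free identification model consistent with the sample-MSE notation $\hat{\mathcal{E}}(\bw_n) = e_n^2$ used throughout Section \ref{s:Kinks}: then $e_n = (\bw^\star - \bw_n)^\top \bx_n$, so $|e_n| \le \|\bw^\star - \bw_n\|\, \|\bx_n\|$, and the right-hand side tends to $0$ because $\|\bw^\star - \bw_n\| \to 0$ and $\|\bx_n\|$ is bounded.

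Putting the pieces together: for each $l$,
\begin{align}
|u_n(l)| = |\mu_f|\,|e_n|\,|x_{n-l}|\,|w_n(l)|^{1-\alpha} \le |\mu_f|\,\big(\|\bw^\star-\bw_n\|\,\|\bx_n\|\big)\,|x_{n-l}|\,M_l^{1-\alpha},
\end{align}
and since $\|\bx_n\|$ and $|x_{n-l}|$ are bounded while $\|\bw^\star - \bw_n\| \to 0$, the right-hand side tends to $0$; hence $u_n(l) \to 0$ for every $l$, i.e.\ $\bu_n \to \mathbf{0}$ in $\RR^N$.

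The main obstacle is the interpretation of ``$e_n \to 0$'': in a genuinely stochastic environment with observation noise this fails, so the honest version of the argument either restricts to the deterministic/noise-free identification model or reinterprets the conclusion as convergence of the fractional update term to its steady-state (generally nonzero) value, which then ``merely contributes to the steady-state error'' exactly as claimed in Section \ref{sss:short}. I would state this caveat explicitly and carry out the proof under the noise-free model, since that is the setting in which the sample MSE $\hat{\mathcal{E}}(\bw_n)=e_n^2$ actually vanishes at $\bw^\star$. A secondary technical point is the regularity of $t\mapsto t^{1-\alpha}$ at $t=0$: it is continuous there (value $0$) but not differentiable, which is harmless for the boundedness/convergence argument but worth a one-line remark, since it is precisely the same non-smoothness underlying the complex-output kink discussed in Section \ref{ss:Complex}.
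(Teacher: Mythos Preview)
Your argument is correct under the extra hypotheses you introduce (bounded input samples and the noise-free identification model $d_n=(\bw^\star)^\top\bx_n$), and you flag those hypotheses honestly. However, the paper's proof takes a different and cleaner route that avoids both of them. The key observation you miss is that the very LMS update rule \eqref{LMSRule} whose convergence is assumed gives the identity
\[
\mu_\ell\, e_n\, x_{n-l} \;=\; w_{n+1}(l) - w_n(l).
\]
Since $(w_n(l))_{n\in\NN}$ converges, it is Cauchy, so $w_{n+1}(l)-w_n(l)\to 0$; hence $e_n x_{n-l}\to 0$ \emph{as a product}, with no need to control $e_n$ and $x_{n-l}$ separately, no boundedness assumption on $\bx_n$, and no restriction to the noise-free case. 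Combined with the same boundedness $|w_n(l)|\le M$ you use, this yields $|u_n(l)|\le (\mu_f/\mu_\ell)\,|w_{n+1}(l)-w_n(l)|\,M^{1-\alpha}\to 0$ directly. In short, your decomposition ``bounded $\times$ bounded $\times$ vanishing'' forces you to manufacture a setting in which $e_n\to 0$, whereas the paper never needs $e_n\to 0$ at all: it only needs $e_n x_{n-l}\to 0$, which is automatic from the Cauchy property of the LMS iterates. This is precisely why the lemma holds as stated without the caveats you add.
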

\begin{proof}
If $\bw_{n}$ converge to $\bw^\star$ as $n\to+\infty$, then $w_n(l)\to w^\star(l)$ for each $l$. Recall that every convergent sequence is bounded, therefore, there exist a number $M\in\RR^+$ such that $|w_n(l)|\leq M$ for all $0\leq l\leq N-1$ and $n\in\NN$. Let $\epsilon >0$ be arbitrary. By convergence of $w_n(l)\to w^\star(l)$, there exists a natural number $N_\epsilon$ such that
\begin{align}
|w_k(l)-w^\star(l)| <\epsilon \left( \frac{\mu_\ell}{2\mu_f M^{1-\alpha}}\right), \qquad \text{ for all }\,\, k > N_\epsilon+1. \label{Mid3}
\end{align}
Note also that, by definition of the general term given in \eqref{LMSRule}, we have \begin{align}
|\mu_\ell e_k x_{k-\ell}| = & | w_k(l)-w_{k-1}(l)| 
\nonumber 
\\
\leq & | w_k(l)-w^\star (l)|+| w_{k-1}(l)-w^\star(l)|
\nonumber 
\\ 
<&\epsilon \left( \frac{\mu_\ell }{\mu_f M^{1-\alpha}}\right), 
\qquad\qquad\qquad\qquad\qquad
\text{for all} \quad 
k > N_\epsilon.
\label{Mid4}
\end{align}
Thus, from \eqref{Mid3} and \eqref{Mid4},  
\begin{align}
|u_k(l)| = |\mu_f e_k x_{k-l}||w_k|^{1-\alpha}\leq \frac{\mu_f}{\mu_\ell}  | \mu_\ell e_k x_{k-l}| M^{1-\alpha} < \epsilon, \qquad
\text{for all} \quad 
k > N_\epsilon.
\end{align}
This completes the proof. 
\end{proof}
\begin{thm}\label{Thm3}
Under the assumption of Lemma \ref{Prop1}, the sequence $(\bv_{n})_{n\in\mathbb{N}}$ given by $\bv_n:=\bw_n+\bu_n$, generated by algorithm  \eqref{FLMSRuleComMod}, converges to $\bw^\star$. 
\end{thm}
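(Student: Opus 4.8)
The plan is to reduce the statement to Lemma~\ref{Prop1} together with the elementary algebra of limits. First I would make explicit the decomposition that is already present in the iterate-update rule \eqref{FLMSRuleComMod}: denoting by $\bu_n\in\RR^N$ the vector with components $u_n(l)=\frac{\mu_f}{\Gamma(2-\alpha)}e_nx_{n-l}|w_n(l)|^{1-\alpha}$, the rule \eqref{FLMSRuleComMod} is exactly the LMS recursion \eqref{LMSRule} perturbed by the additive term $\bu_n$, so that the sequence it generates is precisely $\bv_n=\bw_n+\bu_n$ with $(\bw_n)$ the LMS iterate and $\bu_n$ the ``fractional part''. Thus the object to analyse is $\bv_n=\bw_n+\bu_n$ in the notation of Lemma~\ref{Prop1}.

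Next I would combine the two available facts. By hypothesis the LMS sequence $(\bw_n)_{n\in\NN}$ of \eqref{LMSRule} converges to the Wiener solution $\bw^\star$, hence is bounded; this is exactly the standing assumption of Lemma~\ref{Prop1}, which therefore gives $\bu_n\to\mathbf{0}$. Adding limits componentwise yields $\bv_n=\bw_n+\bu_n\to\bw^\star+\mathbf{0}=\bw^\star$, which is the assertion. For a quantitative version one can use the bound extracted inside the proof of Lemma~\ref{Prop1}, namely that $|u_n(l)|$ is at most a fixed constant times the LMS increment $|w_n(l)-w_{n-1}(l)|$; then $\|\bv_n-\bw^\star\|\le\|\bw_n-\bw^\star\|+C\max_{l}|w_n(l)-w_{n-1}(l)|$, and both terms on the right tend to $0$ because $(\bw_n)$ is convergent, hence Cauchy.

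The only genuinely delicate point --- and what I expect to be the main obstacle in a fully self-contained treatment --- is the identification made in the first paragraph. A literal reading of \eqref{FLMSRuleComMod} feeds the fractional correction back into the next error, so the true fractional iterates $\tilde\bv_n$ obey $\tilde\bv_{n+1}=\tilde\bv_n-\mu_\ell(\tilde\bv_n^\top\bx_n-d_n)\bx_n+\tilde\bu_n$, with $\tilde\bu_n$ evaluated at $\tilde\bv_n$ rather than at $\bw_n$. To bridge this one sets $\bm{\delta}_n:=\tilde\bv_n-\bw_n$, obtains the linear recursion $\bm{\delta}_{n+1}=(\mathbf{I}-\mu_\ell\bx_n\bx_n^\top)\bm{\delta}_n+\tilde\bu_n$ driven by the forcing $\tilde\bu_n$, and must show this forcing is both bounded (so that $(\tilde\bv_n)$ stays bounded and Lemma~\ref{Prop1}'s estimate applies) and eventually negligible. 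Since the homogeneous part $\mathbf{I}-\mu_\ell\bx_n\bx_n^\top$ is only non-expansive, not strictly contractive, concluding $\bm{\delta}_n\to\mathbf{0}$ requires the same kind of stability/averaging argument that underlies the convergence of LMS itself --- precisely the ingredient that is taken as a hypothesis here. If instead one simply \emph{defines} the analysed sequence by $\bv_n:=\bw_n+\bu_n$, as the phrasing of the statement suggests, this paragraph is unnecessary and the proof closes after the second one.
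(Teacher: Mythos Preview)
Your proposal is correct and takes essentially the same approach as the paper: the paper's proof is literally the one-line limit algebra $\lim_{n\to\infty}\bv_n=\lim_{n\to\infty}\bw_n+\lim_{n\to\infty}\bu_n=\bw^\star+\mathbf{0}=\bw^\star$, invoking Lemma~\ref{Prop1} for the second limit. Your third paragraph raises a genuine interpretive issue about the feedback of the fractional correction into the error that the paper does not address at all; as you correctly surmise, the paper sidesteps it by simply \emph{defining} $\bv_n:=\bw_n+\bu_n$ with $\bw_n$ the pure LMS iterate, so the feedback coupling is never actually analysed.
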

\begin{proof}
Simply consider $(\bv_n)_{n\in\NN}$ by $v_n(l):=w_n(l)+u_n(l)$ where $w_n(l)$ and $u_n(l)$ are as in Lemma \ref{Prop1}. Then 
\begin{align}
\lim_{n\to\infty}\bv_n =\lim_{n\to \infty}\bw_n +\lim_{n\to\infty}\bu_n = \bw^\star+\mathbf{0} = \bw^{\star}.
\end{align}
\end{proof}

Theorem \ref{Thm3} substantiates that the convergence of the algorithm \eqref{FLMSRuleComMod} is achieved due to the integer-order gradient factor. For finite number of iterations, the fractional term is not going to vanish no matter how small it may be and, therefore, contribute to the steady state error. In the next section, we conduct a few numerical experiments to support our findings.  

%%%%%%%%%%%%%%%%%%%%%%%%%%%%%%%%%%%%%%%%%%%%%%%%%%%%%%%%%%%
%%%%%%%%%%%%%%%%%%%%%%%%%%%%%%%%%%%%%%%%%%%%%%%%%%%%%%%%%%%
\section{Numerical Simulations and Discussions}\label{s:Num}

To compare the performance of the fractional and integer-order gradient-based LMS algorithms, we consider three evaluation protocols:  
\begin{itemize}
\item[(i)] a system with negative desired weights, 
$$
\bw=
\begin{bmatrix}
-15, &-14, &\cdots, &-2, &-1
\end{bmatrix},
$$  
under both noise-free and noisy environments;  
\item[(ii)] a system with positive desired weights,
$$
\bw=
\begin{bmatrix}
1, & 2, &\cdots, & 14, & 15
\end{bmatrix},
$$  
under both noise-free and noisy environments;  
\item[(iii)] a system with random desired weights under noisy environment using modified weight-update rules \eqref{FLMSRuleComMod} and \eqref{FLMSRule2Mod}.   
\end{itemize} 

\subsection{Experimental Setup}

In the rest of Section \ref{s:Num},  we consider the noisy environments with signal-to-noise ratio (SNR) of $10$dB. The LMS and its fractional-order variants are configured to equal performance at $\alpha=1$.  The performance of the fractional LMS algorithms is observed for fractional exponents $\alpha=0.9$, $0.8$, $0.7$, $0.6$, $0.5$, and $0.4$. For all experiments, the step-size of LMS was fixed as $\mu_{l} = 1\times 10^{-2}$, the step-sizes of fractional LMS algorithm \eqref{FLMSRuleComMod} were fixed as $\mu_{l} = 5\times 10^{-3}$ and $\mu_{f} = 5\times 10^{-3}$, and the step-size of fractional LMS algorithm \eqref{FLMSRule2Mod} was fixed as $\mu_{f} = 1\times 10^{-2}$.  

Protocols (i) and (ii) are used to substantiate that the iterate-update rule \eqref{FLMSRuleCom} (without modulus) is affected by complex outputs whether positive or negative weights are sought and, consequently, the fractional algorithm is  divergent. Similar experimental trends can be delineated for the second iterate-update rule \eqref{FLMSRule2}. Protocol (iii) is used for modified iterate-update rules \eqref{FLMSRuleComMod} and \eqref{FLMSRule2Mod} (with modulus).
For system input, we have considered a random signal of length $1000$ obtained from a zero-mean Gaussian distribution with unit variance. The experiments are repeated for $1000$ independent rounds and mean results are reported.  For each independent round, the weights were initialized with zeros except in algorithm \eqref{FLMSRule2Mod} where weights were initialized randomly from a Gaussian distribution with unit variance.  The performance of all the algorithms is evaluated on mean deviation (MD) which is the $\ell_1-$ norm of the difference between the sought and the approximated weights, i.e., 
\begin{equation*}
	\Delta \bw_n= \frac{\|\bw_n- \hat{\bw}_n\|_{\ell_1}}{N},
\end{equation*}
where $\bw_n$ and $\hat{\bw}_n$ are the sought and approximated weight vectors at $n$th iteration, respectively. Here, $\|\cdot\|_{\ell_1}=|\cdot|$ is the $\ell_1-$ norm and $N$ is the length of the filter vector.

\subsection{Complex Outputs and Divergence}

Figures \ref{FLMS_neg} and \ref{FLMS_pos} show the learning curves for the LMS algorithm and the fractional LMS algorithm \eqref{FLMSRuleCom} for evaluation protocols (i) and (ii), respectively .  For both protocols, we set up both algorithms on an equal convergence rate and compare their steady-state performance.  It can be observed that the fractional LMS algorithm \eqref{FLMSRuleCom} failed to identify the system with negative (Fig. \ref{FLMS_neg}) as well as positive weights (Fig. \ref{FLMS_pos}) for all the listed values of $\alpha$. This conforms to our theoretical findings in Section \ref{ss:Complex}. Similar results hold for the fractional LMS algorithm with iterate-update rule \eqref{FLMSRule2}. 
\begin{figure}[!htb]
\begin{center}
\includegraphics[width=0.45\textwidth]{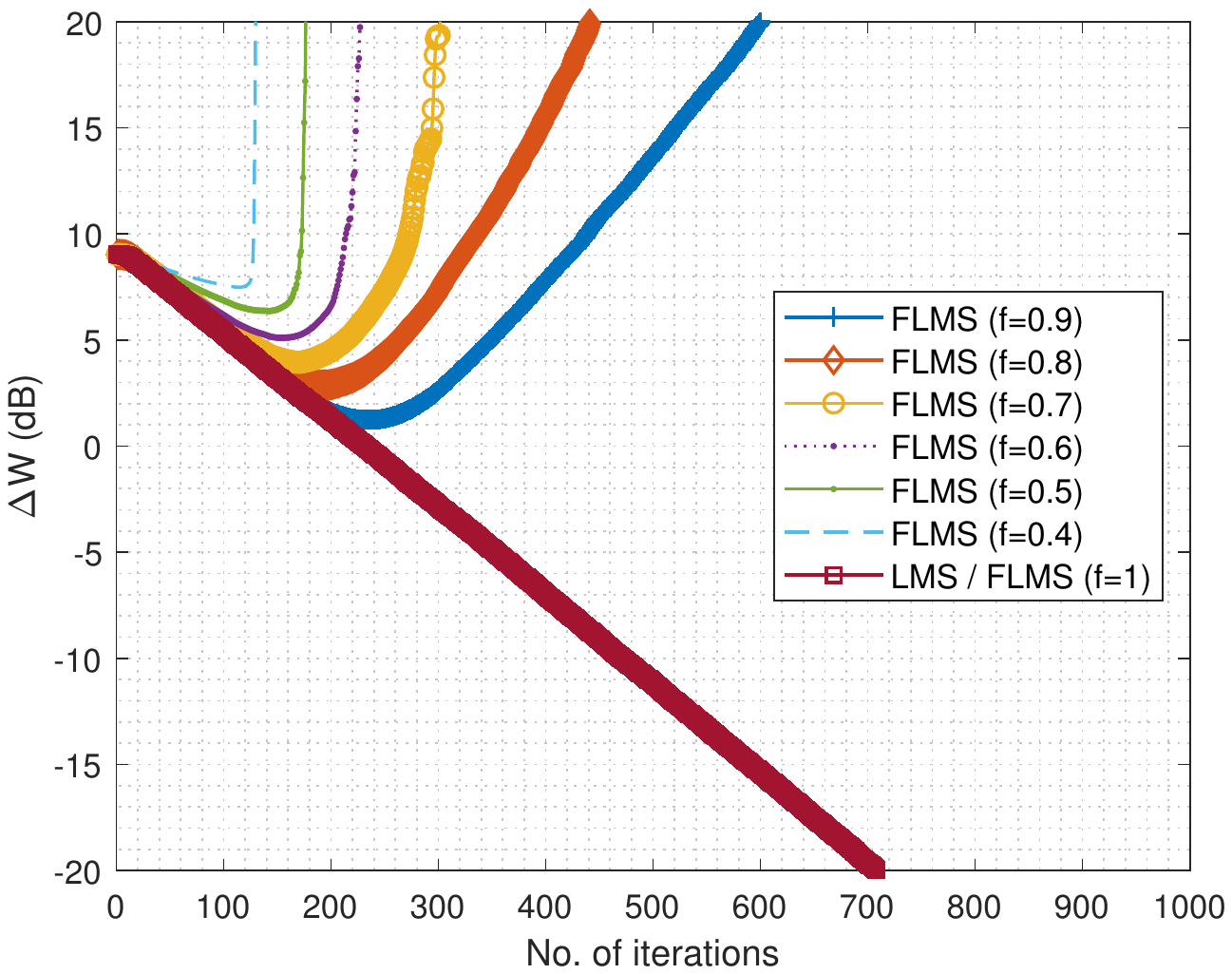}
\hspace{0.05\textwidth}
\includegraphics[width=0.45\textwidth]{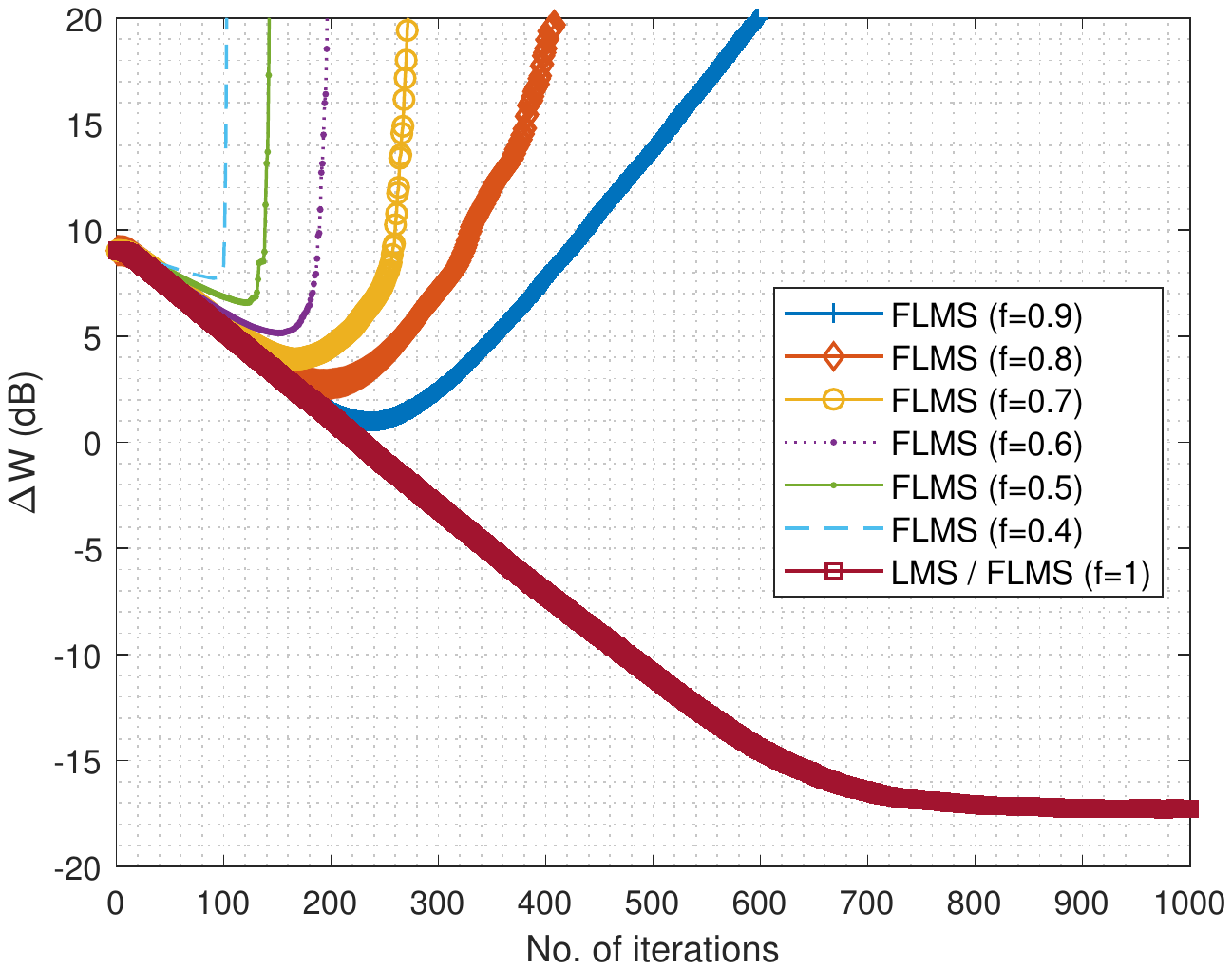}
\caption{Protocol (i): Learning curves for different values of fractional power for  noise-free (left) and noisy (right) environments.}\label{FLMS_neg}
\end{center}
\end{figure}
\begin{figure}[!htb]
\begin{center}
\includegraphics[width=0.45\textwidth]{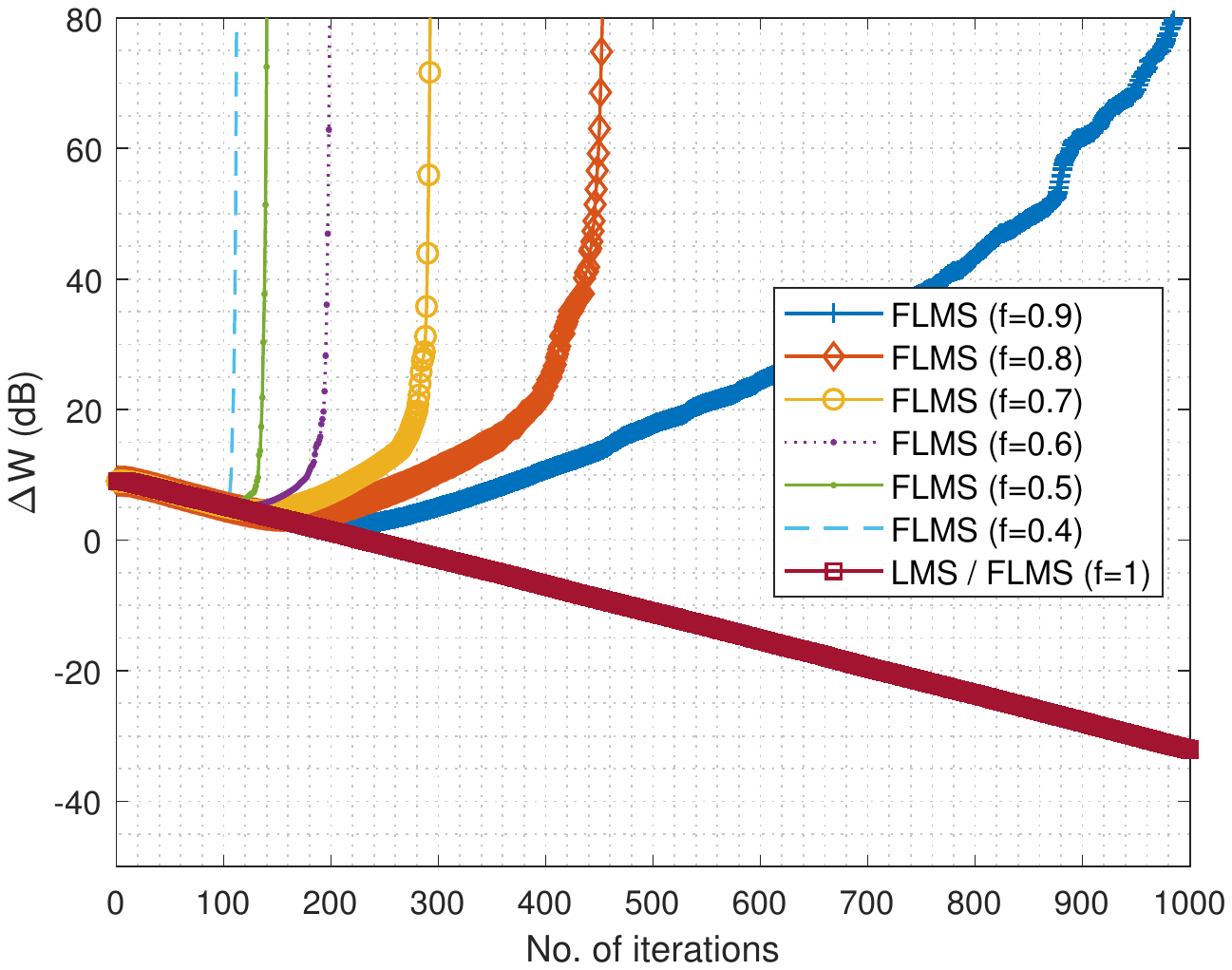}
\hspace{0.05\textwidth}
\includegraphics[width=0.45\textwidth]{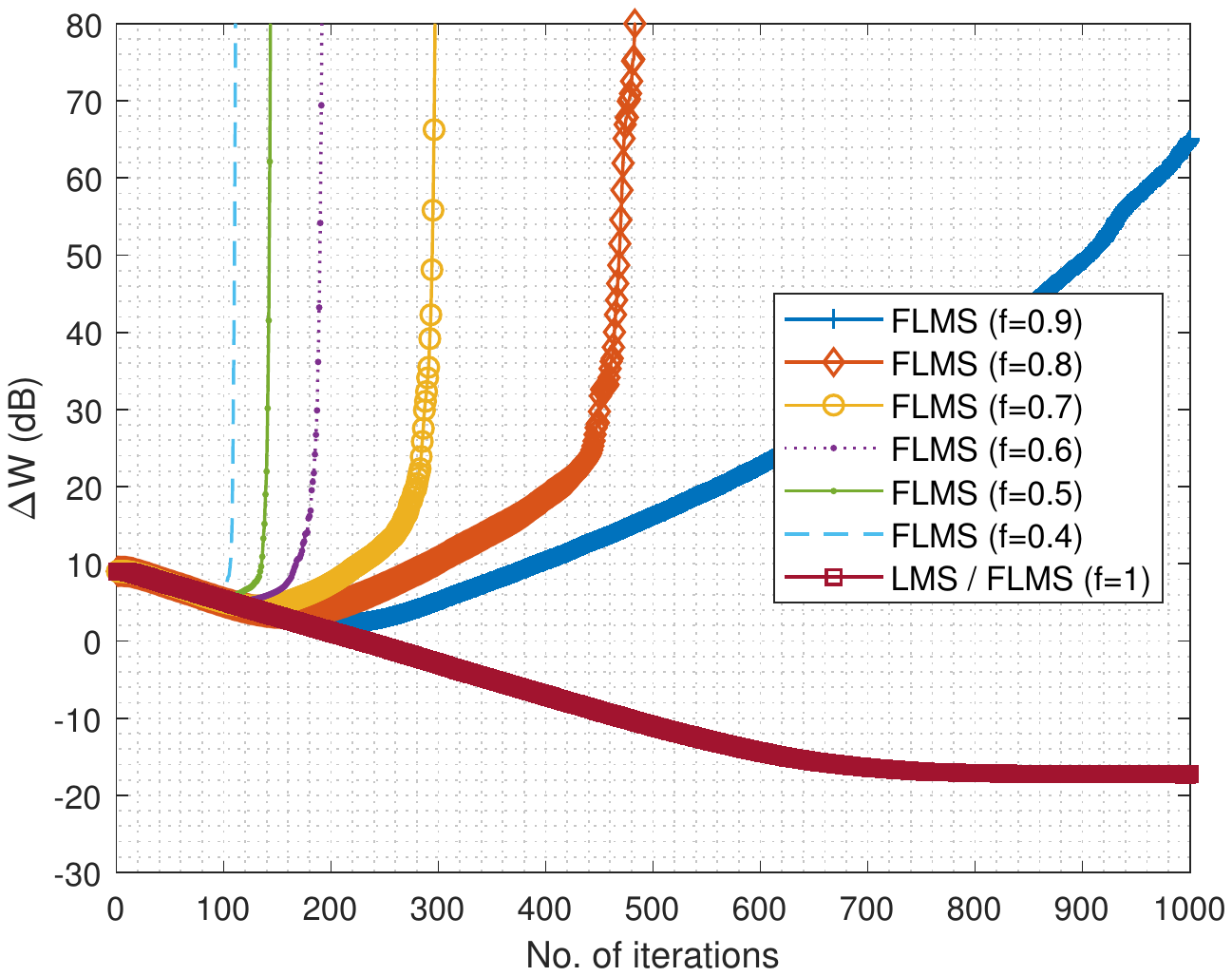}
\caption{Protocol (ii): Learning curves for different values of fractional power for noise-free (left) and noisy (right) environments.}\label{FLMS_pos}
\end{center}
\end{figure}
%Since fractional variants in \eqref{FLMSRule2} under fractional powers could also lead to complex outputs, therefore the same conclusion hold for both algorithms.

\subsection{Performance of the Modified Algorithms}

For evaluating the performance of the iterate-update rules \eqref{FLMSRuleComMod} and \eqref{FLMSRule2Mod}, we choose the random desired weights. The desired weight vector is a random signal of length $30$ obtained from a zero-mean Gaussian distribution with a variance of $1$. For each run, new desired weights were selected. We set up both algorithms at an equal convergence rate and compared the steady-state performance. Figure \ref{FLMS_random} shows the learning curves for the LMS and fractional LMS algorithms \eqref{FLMSRuleComMod} and \eqref{FLMSRule2Mod}.   

It can be seen from Fig. \ref{FLMS_random_a} that the fractional algorithm  \eqref{FLMSRuleComMod} shows comparable results with the LMS algorithm but its convergence rate is relatively low for all the listed values of $\alpha$. Based on the experimental result, we can conclude that there is no noticeable gain in using fractional LMS algorithm \eqref{FLMSRuleComMod} while its computational complexity is higher than the LMS algorithm. 

Figure \ref{FLMS_random_b} indicates that the learning rate of the fractional algorithm \eqref{FLMSRule2Mod} is extremely low as compared to the LMS due to the fractional term $|w_n(l)-w_{n-1}(l)|^{1-\alpha}$. This phenomenon is correlated to the Assumption (B3) as well as the stochastic nature of the problem in the protocol (iii). For the stationary case discussed in Example \ref{example2}, the algorithm was performing reasonably better than the stochastic problem in the protocol (iii). On the other hand, the difference $|w_n(l)-w_{n-1}(l)|$ is decreasing as the algorithm is progressing, therefore, the learning rate is decreasing significantly.  To elaborate further on this, we perform the same experiment with $|w_n(l)-w_{n-1}(l) + \epsilon|^{1-\alpha}$ instead of $|w_n(l)-w_{n-1}(l)|$ in the iterate-update rule with bias compensation parameter $\epsilon = 1\times10^{-10}$. We plot the learning curves again in Fig. \ref{FLMS_random_c} where we can see that the learning rate of the algorithm is improved. However, the algorithm is still not converging at a rate better than the LMS. We can conclude based on these experiments that there is no gain in using the fractional algorithm \eqref{FLMSRule2Mod} instead of the conventional LMS algorithm.
\begin{figure}[!htb]
\centering
\subfigure[Rule \eqref{FLMSRuleComMod}]{\includegraphics[width=0.3\textwidth]{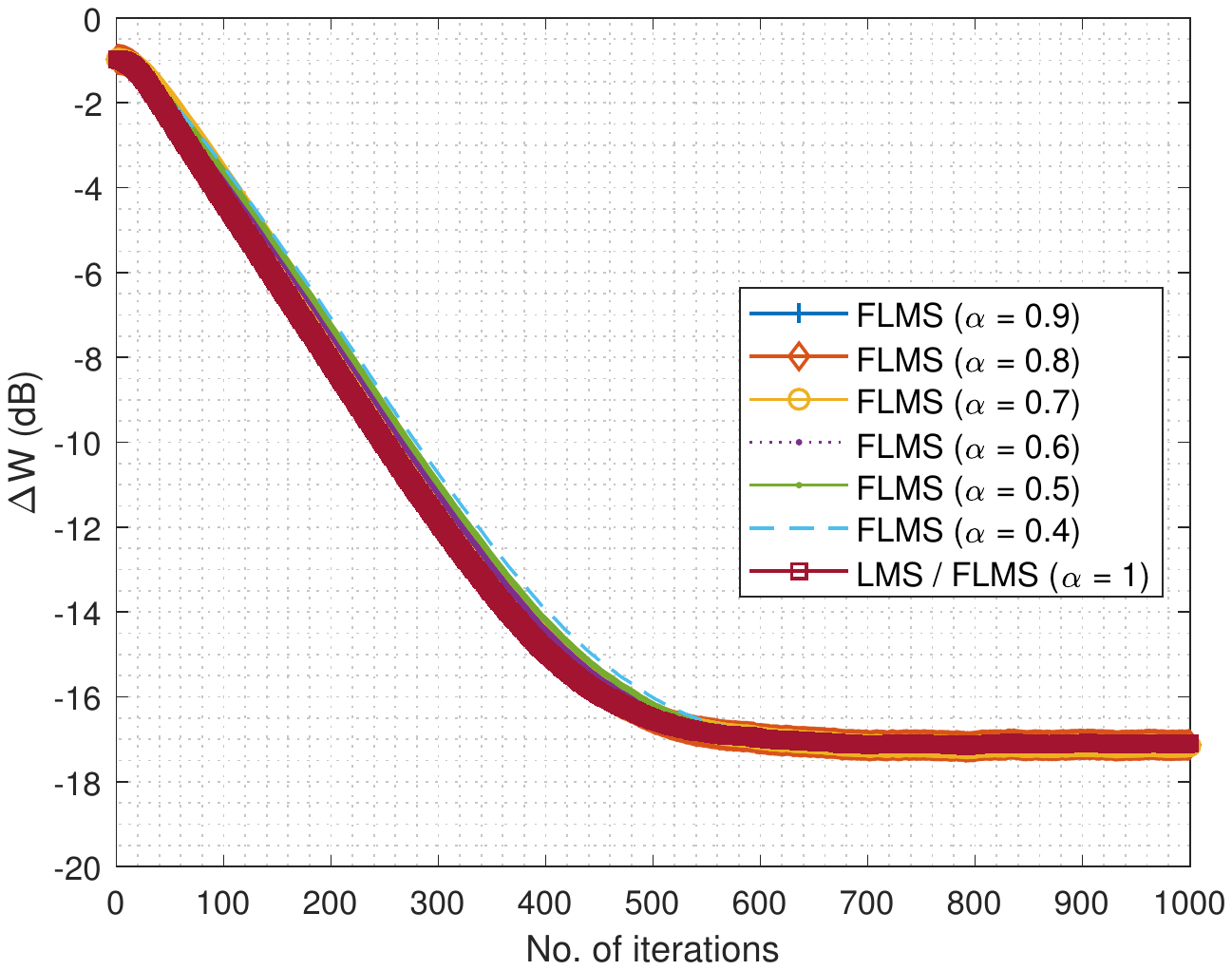}\label{FLMS_random_a}}
\subfigure[Rule \eqref{FLMSRule2Mod}]{\includegraphics[width=0.3\textwidth]{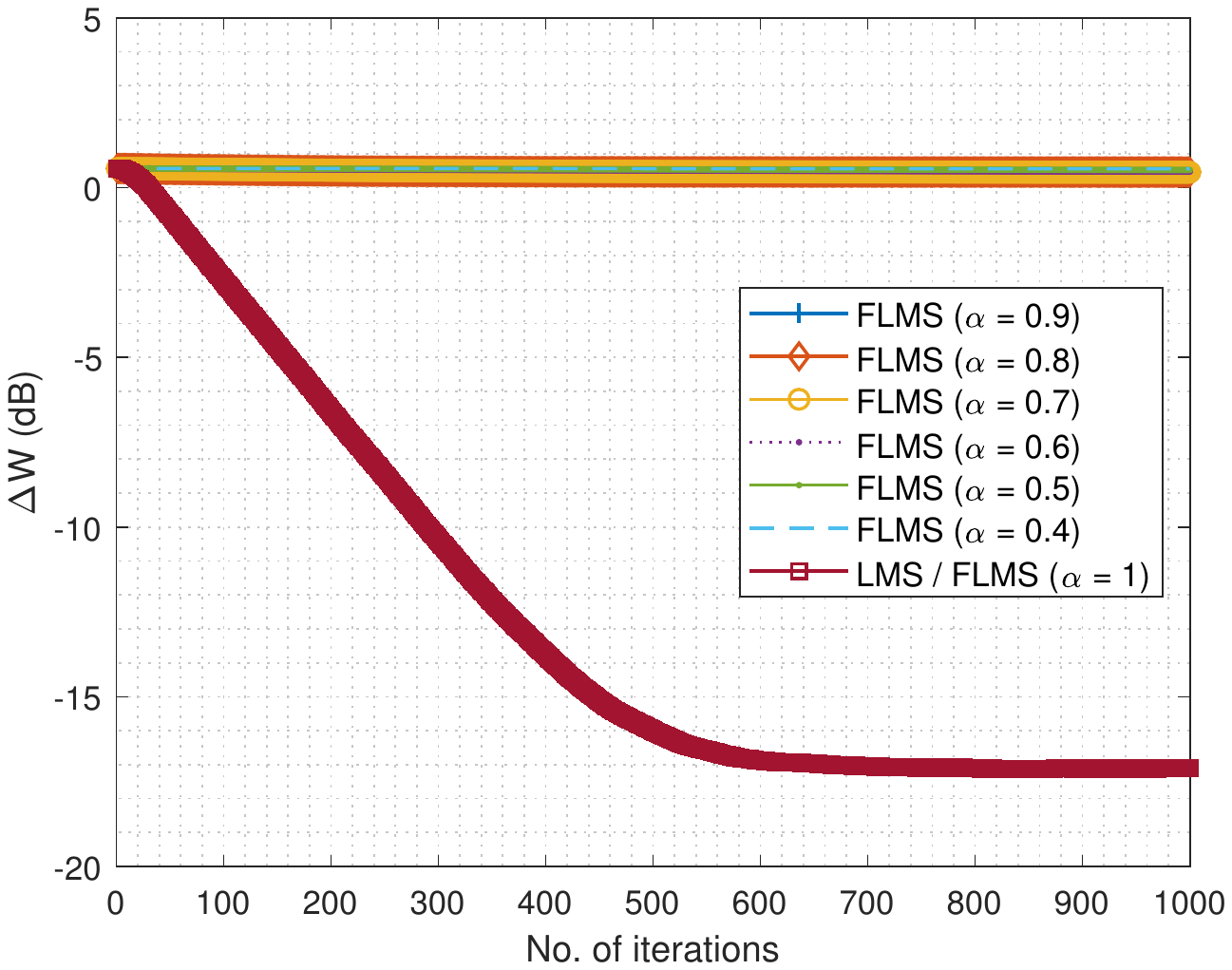}\label{FLMS_random_b}}
\subfigure[Rule \eqref{FLMSRule2Mod} with $\epsilon = 10^{-10}$]{\includegraphics[width=0.3\textwidth]{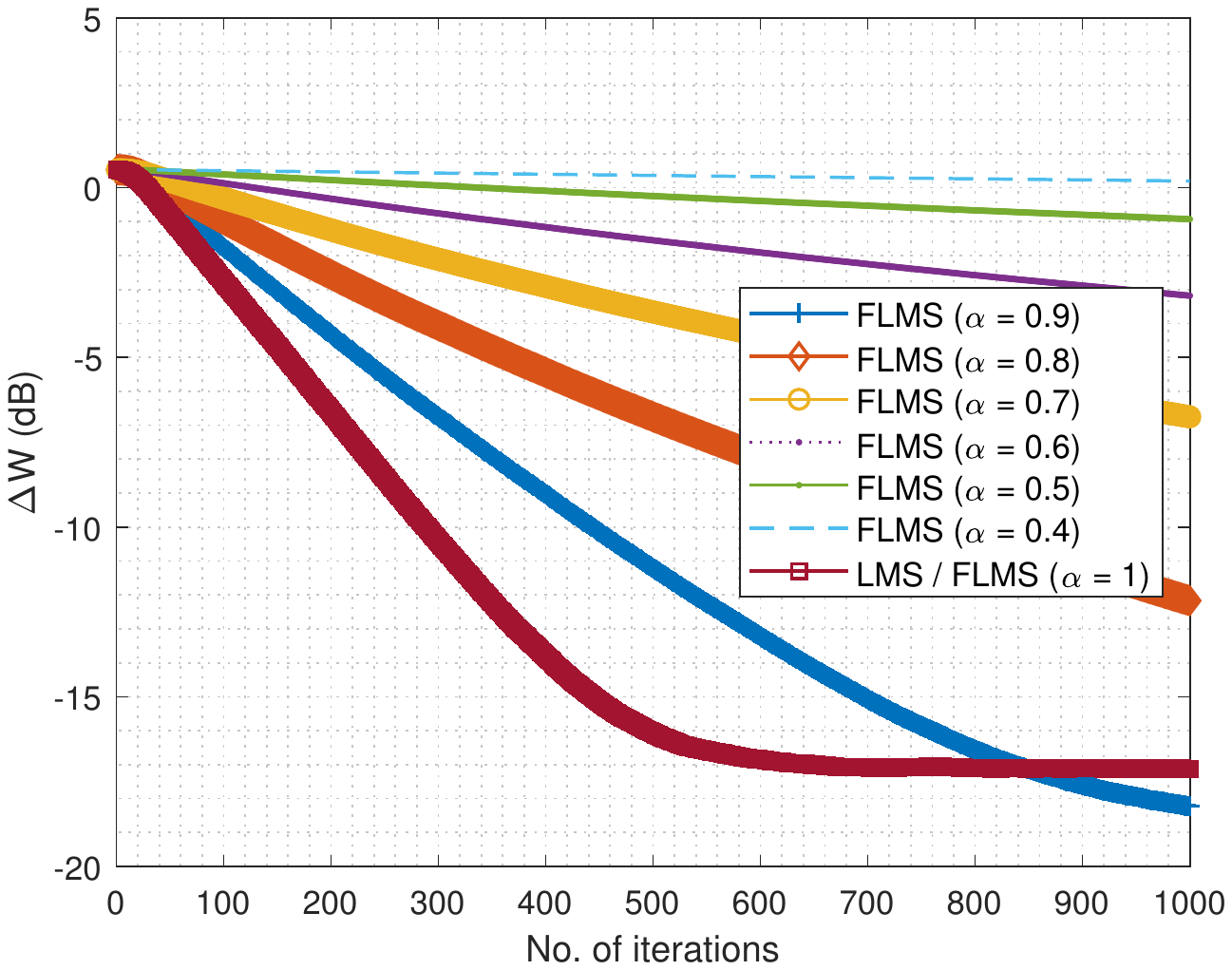}\label{FLMS_random_c}}
\caption{\label{FLMS_random} Protocol (iii): Learning curves for different values of fractional exponent for random weights under noisy environment.} 
\end{figure} 

%%%%%%%%%%%%%%%%%%%%%%%%%%%%%%%%%%%%%%%%%%%%%%%%%%%%%%%%%%%
%%%%%%%%%%%%%%%%%%%%%%%%%%%%%%%%%%%%%%%%%%%%%%%%%%%%%%%%%%%
\section{Conclusion} \label{s:Conc}
In this article, we have rigorously analyzed the performance of the fractional learning algorithms. We have discussed the schematic kinks of the fractional learning algorithms and proposed their remedies. The following are our key observations. 

\begin{enumerate}

\item The fractional gradients are commonly used in literature out of their domain of definition. This is the reason for complex outputs and divergence of the fractional learning algorithms both for negative and positive sought weights.  The use of absolute values in the iterate-update rules to avoid complex outputs can be justified and is a ready-witted remedy.  

\item The fractional gradients in stationary cases do not point opposite to the steepest descent and, therefore, cannot lead to the \emph{steepest} path towards the minimizer of the mean square error. Therefore, their convergence rate cannot be faster than the steepest gradient descent algorithm. Numerical experiments suggest an identical trend in the stochastic case with a possibility of fractional learning algorithms having comparable convergence performance as of the LMS algorithm  due to the stochastic nature of the instantaneous gradient.

\item Since, the geometric interpretation of the fractional derivatives is almost unknown, it is difficult to associate fractional derivatives to the extreme values of a function. In particular, the fractional derivatives are usually nonzero at the true critical point of the mean square error. The fractional critical point is non-unique, dependent on fractional exponents, additive constants, and chosen domain of definition for the fractional derivatives (or roughly the interval in which the solution is sought).

\item The analysis suggests that the performance of the fractional learning algorithms is dependent on the initial guess. In the stationary case, we have proved that the representative fractional iterate-update rules with modulus do converge but not necessarily to the Wiener solution. Their steady-state error, convergence rate, and computational cost cannot be better than the conventional steepest gradient descent. In the stochastic case, numerical simulations indicate that the performance of the fractional algorithms is comparable to that of the LMS algorithm at the best and that at the price of higher computational cost and higher steady-state error. 

\item A rigorous stochastic study of the fractional algorithms requires a complete understanding of the statistical distribution of the fractional iterate-updates, which is still an open question. 

\end{enumerate} 

Based on the rigorous mathematical analysis and numerical experiments performed in this article and the observations above, we conclude that the fractional learning algorithms cannot outperform the conventional integer-order learning algorithms even if their schematic kinks are removed. The fractional learning algorithms have higher computational costs, higher steady-state error, and relatively lower (or comparable at best) convergence rates than their conventional counterparts. Our conclusions conform to those drawn by Bershad, Wen, and So \cite{Bershad} using a comprehensive numerical study.

 \bibliographystyle{plain}

\end{document}